\documentclass[preprint,12pt]{colt2026} 

\usepackage[capitalize,noabbrev]{cleveref}
\usepackage{enumitem}

\makeatletter
\@ifundefined{assumption}{\newtheorem{assumption}{Assumption}}{}
\makeatother

\crefname{assumption}{Assumption}{Assumptions}
\Crefname{assumption}{Assumption}{Assumptions}

\title[Semantic Reference Frame for Residual-Stream Dynamics]{SemRF: A Semantic Reference Frame for Residual-Stream Dynamics in Language Models}

\usepackage{times}
\usepackage{xurl}
\usepackage{microtype}

\newcommand{\R}{\mathbb{R}}
\newcommand{\ip}[2]{\langle #1,#2\rangle}
\newcommand{\norm}[1]{\lVert #1\rVert}

\coltauthor{%
 \Name{Jian Gu} \Email{jian.gu@monash.edu}\\
 \addr Monash University
 \AND
 \Name{Aldeida Aleti} \Email{aldeida.aleti@monash.edu}\\
 \addr Monash University
 \AND
 \Name{Chunyang Chen} \Email{chun-yang.chen@tum.de}\\
 \addr Technical University of Munich
 \AND
 \Name{Hongyu Zhang} \Email{hyzhang@cqu.edu.cn}\\
 \addr Chongqing University
}

\begin{document}


\maketitle

\begin{abstract}
Residual-stream analysis asks how a language model's computation evolves across depth, but this question requires a stable semantic coordinate system. Intermediate decoding is meaningful only when readout coordinates remain comparable across layers. In language models, this comparability is mediated by the token interface: if embedding-side anchors and unembedding-side readout disagree on the semantic span under analysis, apparent hidden-state motion may reflect measurement drift rather than computation.
We introduce \emph{Semantic Reference Frames} (SemRF), an anchor-based formalism that separates semantic measurement from residual-stream dynamics. A SemRF fixes the anchors used in the analysis and measures residual states against them. Pseudo-inverse tying supplies an exact anchor-synchronization case; under restricted bi-invertibility on the chosen span, SemRF gives stable semantic-basis coordinates, distortion bounds, and controlled near-identity frame changes.
With the frame fixed, residual computation becomes a depthwise semantic trajectory. The same anchors induce a semantic Voronoi diagram: semantic distance, equivalently semantic evidence such as logits, assigns each layer to a coarse cell, while coordinates retain within-cell motion and margins. We define layerwise steps, target-direction contribution profiles, and imbalance diagnostics, then use the observed Voronoi trace to define a margin-relaxed semantic tube. The canonical trace is the minimum-action path inside this tube. In a nonempty tube with at least one positive quadratic weight, it is unique and satisfies a discrete spline equation away from active constraints. Excess action controls step mismatch, curvature mismatch, interior deviation, and target-direction contribution-profile mismatch.
Finally, low curvature energy implies piecewise-linear compressibility. This yields a local notion of knowledge density: within a fixed admissible frame and tolerance, lower trace complexity means that the depth-by-coordinate trajectory can be summarized by fewer semantic knots. Through the residual network's parameter-to-trajectory map, this also gives a conditional link to parameter efficiency: among admissible parameter settings that fit the same data, lower-action and lower-complexity traces use fewer effective semantic degrees of freedom. The guarantees require controlled interface error, small projection residual, and explicit tube constraints.
\end{abstract}

\begin{keywords}%
 semantic reference frames,
 semantic dynamics,
 language models,
 mechanistic interpretability,
 knowledge density
\end{keywords}

\section{Introduction}

Transformer language models update a shared residual state through layerwise writes. This suggests a trajectory view: each layer moves the same state, and intermediate residual vectors carry partial semantic evidence. LogitLens and Tuned Lens decode hidden states into vocabulary space across depth~\cite{nostalgebraist2020logitlens,belrose2023eliciting}, but such trajectories are meaningful only when readout coordinates remain comparable across layers. If embedding-side anchors and unembedding-side readout disagree on the semantic span under analysis, apparent semantic motion may reflect measurement drift rather than computation.

We introduce \emph{Semantic Reference Frames} (SemRF), an anchor-based formalism for separating measurement from dynamics. A SemRF fixes the semantic anchors used for the analysis and measures residual states against them. The main structural assumption is restricted bi-invertibility of the token interface on this anchor span. Pseudo-inverse tying supplies an exact anchor-synchronization case; when the restricted interface condition holds, semantic coordinates are stably measurable and interface mismatch yields explicit distortion bounds. SemRF specifies when semantic readout, diagnosis, and comparison are well posed.

Once measurement is controlled, residual evolution becomes a semantic trajectory. The same semantic anchors induce a semantic Voronoi diagram: each state is assigned to the anchor with smallest semantic distance, equivalently largest semantic evidence such as a logit. This is the sense in which we use ``Voronoi'' throughout; ordinary Euclidean nearest-neighbor Voronoi is only the special case induced by a Euclidean distance score. The diagram gives cells, faces, margins, and a coarse Voronoi trace; continuous coordinates record within-cell motion, cell crossings, and reversals along a fixed target direction. We combine the two views through a margin-relaxed \emph{semantic tube}: the observed Voronoi trace defines a coarse route, and a path action selects the minimum-action path inside the tube as the canonical trace. The terminology is close to Semantic Tube Prediction~\cite{huang2026semantic}, but here the tube is an analysis constraint induced by measured Voronoi cells rather than a training objective or geodesic prior. When tube constraints are active, the comparison is no longer just endpoint interpolation.

The scope of our research is local and diagnostic: after choosing semantic anchors, checking interface and projection errors, and fixing tube constraints, depthwise semantic computation becomes a measurement-and-trajectory problem. Within this setting, we define layerwise semantic steps, target-direction contribution profiles, and imbalance measures; prove stability under nearby admissible frames; and show that excess action controls deviations from the canonical trace. Low-curvature traces are piecewise-linearly compressible, yielding a local notion of knowledge density: for a fixed admissible frame and tolerance, lower trace complexity means that the depth trajectory can be summarized by a few semantic knots rather than by all layer-by-coordinate states. Parameter-efficiency claims are read through the induced trajectory map \(\theta\mapsto z_{0:L}(x;\theta)\), not through a direct identification between hidden-state locations and unique parameter values: after interface alignment stabilizes the semantic basis, parameter settings can be compared by the complexity of the semantic transport they realize on the same data.
Our contributions are as follows:

\begin{enumerate}
\item \emph{Controlled semantic measurement.} We formalize semantic reference frames as anchor-based coordinate systems for residual-stream analysis, identify restricted interface conditions under which semantic coordinates are stably measurable, and derive distortion bounds on the chosen anchor span.

\item \emph{Stable semantic dynamics and canonical traces.} We define layerwise steps, target-direction contribution profiles, and imbalance diagnostics; prove cross-frame stability; and use an anchor-induced semantic Voronoi diagram to define a margin-relaxed semantic tube whose minimum-action path is the canonical trace.

\item \emph{Deviation control and local knowledge density.} We show that excess action controls specified deviations from the canonical trace, and that low curvature energy yields piecewise-linear compressibility, giving a frame-relative interpretation of knowledge density and a conditional bridge to parameter efficiency.
\end{enumerate}

\section{Related Work}

We position SemRF relative to work on intermediate readout, LM semantics, residual-stream analysis, geometric control, and efficiency.

\paragraph{Intermediate Vocabulary Readout.}
LogitLens and Tuned Lens recover token-level information from intermediate residual states~\cite{nostalgebraist2020logitlens,belrose2023eliciting}. Probing, contextual-geometry, and latent-visualization studies examine how semantic content is organized in hidden representations~\cite{petroni2019language,reif2019visualizing,ethayarajh2019contextual}. SemRF isolates the measurement issue behind these methods: trajectory-level claims require coordinates that remain comparable across depth, not a readout recalibrated independently at each layer.

\paragraph{Semantics and Interface Alignment.}
Recent work formulates LM semantics through vocabulary-defined semantics, semantic transition analysis, latent semantic alignment, and cross-model semantic transfer~\cite{gu2024vocabulary,gu2025semantic,gu2026beyond}. Pseudo-inverse tying connects semantic stability to synchronization between the input embedding and output unembedding on the relevant span~\cite{gu2025rethinking}. SemRF builds on this line by making the semantic anchors explicit, using them to form anchor-induced Voronoi cells, and turning restricted bi-invertibility and admissibility into quantitative conditions for comparing anchor-restricted coordinates across depth.

\paragraph{Residual-Stream Analysis.}
Mechanistic interpretability views Transformer computation as successive writes to a shared residual stream~\cite{elhage2021mathematical,olsson2022context,elhage2021circuits}. This perspective supports circuit analyses, causal mechanism identification, sparse autoencoders, superposition models, and residual-stream factorization~\cite{conmy2023towards,wu2023interpretability,huben2023sparse,bricken2023monosemanticity,templeton2024scaling,lawson2024residual,elhage2022toy}. Related work also studies representation stabilization and information transport through depth~\cite{yu2023exploring,janiak2024characterizing,shai2024transformers,meng2022locating}. SemRF asks when depth-indexed semantic motion reflects computation rather than readout drift.

\paragraph{Representation Geometry and Control.}
Work on linear features, representation steering, internal-update equivalence, isotropy, and embedding geometry studies when representations admit stable geometric interpretation~\cite{park2023linear,zou2023representation,goldwaser2025equivalence,hu2026representational,liang2021learning,ethayarajh2019contextual}. Semantic Tube Prediction uses a geometric trajectory prior as a JEPA-style learning signal~\cite{huang2026semantic}. SemRF uses the related idea as an analysis device: its semantic tube is the margin-relaxed set of paths that follow the trusted Voronoi trace induced by semantic anchors in a validated frame.

\paragraph{Parameter Efficiency and Knowledge Density.}
Scaling laws and capability-density analyses study efficiency at the level of parameters, data, and compute~\cite{kaplan2020scaling,hoffmann2022training,xiao2025densing}. SemRF uses a narrower notion: after admissible measurement is fixed, parameter settings that solve the same data can be compared by the action and complexity of the semantic traces they induce. This trace-level ranking is complementary to global efficiency metrics, not a replacement for them.

\section{Coordinates and Measurement}
\label{sec:setup}

We begin with the measurement problem. Let $h_{\ell,t}\in\R^d$ denote the residual-stream state at depth $\ell\in\{0,\dots,L\}$ and position $t$, and write
\[
h_{\ell+1,t}=h_{\ell,t}+u_{\ell,t}(h_{\ell,t}),\qquad \Delta h_{\ell,t}:=u_{\ell,t}(h_{\ell,t}).
\]
We analyze a fixed position $t$ and suppress it. Depth $\ell$ is discrete time. A \emph{semantic reference frame} (SemRF) consists of semantic anchors $\mathcal B$ and a coordinate map $\phi:\R^d\to\R^K$. It induces coordinates $z_\ell:=\phi(h_\ell)$ and increments $\Delta z_\ell:=z_{\ell+1}-z_\ell$. This section asks when these coordinates are stable measurements of the chosen anchor span.

\subsection{Coordinates and Vocabulary Readout}
A SemRF starts by fixing the semantic anchors that define the analysis. Let $\mathcal{B}=\{b_k\}_{k=1}^K\subset \R^d$, $K\ge2$, be these anchors, and let
\begin{equation}
z = \phi(h)\in\R^K,
\label{eq:phi_def}
\end{equation}
be the corresponding semantic coordinates of a residual state $h\in\R^d$.

\begin{definition}[SemRF coordinates]
A semantic coordinate system is a pair $(\phi,\mathcal{B})$ consisting of anchors $\mathcal{B}$ and a coordinate map $\phi:\R^d\to\R^K$. The coordinate $z_i=\phi_i(h)$ is interpreted as the semantic evidence of state $h$ for anchor $b_i$.
\end{definition}

\paragraph{Semantic-basis coordinates.}
Throughout the paper, $z=\phi(h)$ denotes the coordinate vector in the semantic basis fixed by $\mathcal B$. Margins, Voronoi assignments, steps, action values, and compressibility quantities are all computed in this single reference frame.

\paragraph{Notation.}
$\ip{\cdot}{\cdot}$ is the standard Euclidean inner product in the relevant ambient coordinate space, and $\norm{\cdot}$ denotes the induced $\ell_2$ norm on vectors and the corresponding operator norm on matrices.

\paragraph{Vocabulary anchors and interface readout.}
Vocabulary readout is the motivating example. Let $W_{\mathrm{in}}\in\R^{V\times d}$ be the input embedding and $W_{\mathrm{U}}\in\R^{V\times d}$ the unembedding. Writing $E:=W_{\mathrm{in}}^\top\in\R^{d\times V}$, the $i$-th column $e_i$ is the input-side anchor for token $i$. Output-side anchors can be obtained from $\tilde B:=W_{\mathrm{U}}^{+}\in\R^{d\times V}$. A hidden state is measured by its evidence for the chosen anchors, for example through logit coordinates or cosine similarity~\cite{gu2024vocabulary,gu2025semantic}.

\paragraph{Semantic Voronoi cells and margins.}
The same semantic anchors in $\mathcal B$ induce a semantic Voronoi diagram through semantic distances, equivalently through monotone semantic evidence scores. In the logit case, larger logit evidence means smaller semantic distance; for cell assignment one may set $d_i(h):=-\phi_i(h)$. Thus the cell of anchor $b_i$ is
\[
\mathcal V_i:=\{h:\phi_i(h)\ge \phi_j(h)\ \text{for all }j\}.
\]
The induced Voronoi index is
\[
\kappa(h):=\arg\max_i \phi_i(h),
\]
with ties broken by a fixed rule. A Voronoi face is where two anchors tie, $\phi_i(h)=\phi_j(h)$. The margin
\[
\operatorname{mar}(h):=\phi_{(1)}(h)-\phi_{(2)}(h),
\]
where $\phi_{(1)}(h)$ and $\phi_{(2)}(h)$ are the largest and second-largest coordinates, is the semantic-distance gap to the nearest competing face. Thus $\kappa(h)$ gives a coarse semantic state, while $\phi(h)$ and its margin retain within-cell information. Throughout the paper, ``Voronoi'' refers to this semantic-distance/evidence diagram. It reduces to ordinary Euclidean Voronoi only for special distance-derived scores; the results below use only the coordinate dominance inequalities $z_i\ge z_j$, which are convex halfspace constraints.

\subsection{Interface Alignment and Admissibility}
We now state when vocabulary readout becomes a controlled measurement procedure on the chosen anchor span. The first two propositions record exact and approximate synchronization cases; the assumption that follows is the local condition used by the later bounds.

\begin{proposition}[Exact interface synchronization]
\label{prop:pitz_canonical}
Suppose there exist matrices $Z\in\R^{V\times d}$ and an invertible $T\in\R^{d\times d}$ such that (i) the embedding and unembedding are parameterized as $W_{\mathrm{in}}=ZT^{-1}\in\R^{V\times d}$ and $W_{\mathrm{U}}=ZT^\top\in\R^{V\times d}$, and (ii) $Z$ has orthonormal columns: $Z^\top Z=I_d$.
Then the unembedding pseudoinverse recovers the input-side anchor matrix:
\[
W_{\mathrm{U}}^{+} = T^{-\top}Z^\top = W_{\mathrm{in}}^\top.
\]
Equivalently, the output-side pseudoinverse anchors coincide with the embedding anchors, so semantic coordinates defined via the restricted unembedding pseudoinverse are consistent across the input and output interfaces.
\end{proposition}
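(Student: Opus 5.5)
The plan is to compute the Moore--Penrose pseudoinverse of $W_{\mathrm U}=ZT^\top$ directly and then check that it equals $W_{\mathrm{in}}^\top=(ZT^{-1})^\top=T^{-\top}Z^\top$. Since $Z\in\R^{V\times d}$ has orthonormal columns, it has full column rank $d$. Because $T^\top$ is invertible, $W_{\mathrm U}=ZT^\top$ also has full column rank $d$. So I will use the full-column-rank formula
\[
W_{\mathrm U}^{+}=(W_{\mathrm U}^\top W_{\mathrm U})^{-1}W_{\mathrm U}^\top .
\]

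The key computation comes first. We have $W_{\mathrm U}^\top W_{\mathrm U}=T Z^\top Z T^\top = T T^\top$, which is invertible. Then
\[
W_{\mathrm U}^{+}=(TT^\top)^{-1}T Z^\top = T^{-\top}T^{-1}T Z^\top = T^{-\top}Z^\top .
\]
This agrees with $W_{\mathrm{in}}^\top=T^{-\top}Z^\top$, since $(T^{-1})^\top=T^{-\top}$.

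As a safeguard I would also verify the four Penrose conditions for $X:=T^{-\top}Z^\top$ directly, since they characterize the pseudoinverse uniquely. First, $W_{\mathrm U}X=ZT^\top T^{-\top}Z^\top=ZZ^\top$, which is symmetric and idempotent. Second, $XW_{\mathrm U}=T^{-\top}Z^\top Z T^\top=I_d$, which is symmetric. From these, $W_{\mathrm U}XW_{\mathrm U}=W_{\mathrm U}$ and $XW_{\mathrm U}X=X$ follow immediately.

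The "equivalently" clause then follows by reading off columns. The $i$-th column of $\tilde B=W_{\mathrm U}^{+}$ equals the $i$-th column $e_i$ of $E=W_{\mathrm{in}}^\top$. Hence coordinates defined through output-side pseudoinverse anchors coincide with those built on the input anchors.

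There is no real obstacle here. The only point needing care is the full-column-rank justification ($V\ge d$ is implicit in $Z^\top Z=I_d$), which is what licenses the closed-form pseudoinverse. The Penrose check removes even that dependence.
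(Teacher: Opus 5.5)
Your proposal is correct and follows the same route as the paper: compute $W_{\mathrm U}^{+}=(ZT^\top)^{+}=T^{-\top}Z^\top$ and identify it with $W_{\mathrm{in}}^\top=T^{-\top}Z^\top$. The paper states the pseudoinverse identity as following from $Z^\top Z=I_d$ and the invertibility of $T$ without further detail, and your full-column-rank computation and Penrose-condition check supply that justification.
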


\begin{proposition}[Approximate interface synchronization]
\label{prop:approx_pit}
Let $E_{\mathcal B}\in\R^{d\times K}$ have full column rank and write $E_{\mathcal B}^+=(E_{\mathcal B}^\top E_{\mathcal B})^{-1}E_{\mathcal B}^\top$. Let $\eta:=\|U_{\mathcal B}^\top-E_{\mathcal B}^+\|$ measure the discrepancy between a restricted readout and the anchor pseudoinverse. Then
\[
\|U_{\mathcal B}^\top E_{\mathcal B}-I_K\|\le \eta\,\|E_{\mathcal B}\|.
\]
Moreover, if $U_{\mathcal B}^\top=\tilde E_{\mathcal B}^+$ for a nearby full-column-rank synthesis matrix $\tilde E_{\mathcal B}$ with $\|E_{\mathcal B}-\tilde E_{\mathcal B}\|\le \delta$, then
\[
\|U_{\mathcal B}^\top E_{\mathcal B}-I_K\|\le \|\tilde E_{\mathcal B}^+\|\,\delta.
\]
\end{proposition}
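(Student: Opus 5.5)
The plan is to reduce both bounds to the single identity $E_{\mathcal B}^+E_{\mathcal B}=I_K$, which holds because $E_{\mathcal B}$ has full column rank: $(E_{\mathcal B}^\top E_{\mathcal B})^{-1}E_{\mathcal B}^\top E_{\mathcal B}=I_K$.

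For the first inequality, subtract this identity from the restricted readout composed with the anchors:
\[
U_{\mathcal B}^\top E_{\mathcal B}-I_K=U_{\mathcal B}^\top E_{\mathcal B}-E_{\mathcal B}^+E_{\mathcal B}=(U_{\mathcal B}^\top-E_{\mathcal B}^+)E_{\mathcal B}.
\]
Submultiplicativity of the operator norm then gives $\|U_{\mathcal B}^\top E_{\mathcal B}-I_K\|\le\eta\,\|E_{\mathcal B}\|$. No further input is needed here.

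For the second inequality, use the hypothesis $U_{\mathcal B}^\top=\tilde E_{\mathcal B}^+$ together with the fact that $\tilde E_{\mathcal B}$ also has full column rank, so $\tilde E_{\mathcal B}^+\tilde E_{\mathcal B}=I_K$. Rather than comparing the two pseudoinverses, which would require a perturbation bound for $(\cdot)^+$, the key step is to insert $\tilde E_{\mathcal B}$ directly:
\[
U_{\mathcal B}^\top E_{\mathcal B}-I_K=\tilde E_{\mathcal B}^+E_{\mathcal B}-\tilde E_{\mathcal B}^+\tilde E_{\mathcal B}=\tilde E_{\mathcal B}^+(E_{\mathcal B}-\tilde E_{\mathcal B}).
\]
Submultiplicativity then yields the bound $\|\tilde E_{\mathcal B}^+\|\,\delta$.

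The only point requiring care is to use the left-inverse property, valid under full column rank, in the correct order of multiplication. Everything else is routine norm algebra, and no spectral perturbation theory is required.
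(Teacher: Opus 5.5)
Your proposal is correct and follows the paper's proof essentially verbatim: both bounds come from the left-inverse identities $E_{\mathcal B}^+E_{\mathcal B}=I_K$ and $\tilde E_{\mathcal B}^+\tilde E_{\mathcal B}=I_K$. These yield the factorizations $(U_{\mathcal B}^\top-E_{\mathcal B}^+)E_{\mathcal B}$ and $\tilde E_{\mathcal B}^+(E_{\mathcal B}-\tilde E_{\mathcal B})$, and submultiplicativity of the operator norm then finishes each bound.
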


\begin{assumption}[Token-interface bi-invertibility]
\label{ass:interface}
Let $\mathcal{B}=\{b_k\}_{k=1}^K$ be the anchor set used by $\phi$.
There exist an anchor synthesis matrix $E_{\mathcal{B}}\in\R^{d\times K}$ and a restricted readout matrix $U_{\mathcal{B}}\in\R^{d\times K}$ such that
\[
\norm{U_{\mathcal{B}}^\top E_{\mathcal{B}} - I_K} \le \varepsilon,
\]
with $E_{\mathcal{B}}$ full column rank and $\|E_{\mathcal{B}}^+\|\le C_E$, $\|U_{\mathcal{B}}\|\le C_U$ for fixed frame constants $C_E,C_U$.
\end{assumption}

This local assumption rules out spurious coordinate drift created by the token interface. It requires stable synthesis and readout on the chosen anchor span, not a global semantic basis.

\begin{definition}[Admissible SemRF regime]
Fix tolerances $\eta_{\mathrm{int}},\eta_{\mathrm{proj}}>0$. We call a prediction event $(\theta,x,y)$ \emph{admissible} for a frame $(\phi,\mathcal B)$ when the interface mismatch satisfies
\[
\norm{U_{\mathcal B}^\top E_{\mathcal B}-I_K}\le \eta_{\mathrm{int}},
\]
and the relative projection residual of each analyzed state satisfies
\[
\frac{\|h_\ell-E_{\mathcal B}E_{\mathcal B}^{+}h_\ell\|}{\|h_\ell\|}\le \eta_{\mathrm{proj}},\qquad \ell=0,\dots,L.
\]
The same requirement may also be imposed on residual writes $\Delta h_\ell$ when step-level conclusions are needed.
\end{definition}

All quantitative results below are conditional on admissibility. Large interface mismatch means unstable readout; large projection residual means the chosen anchors miss active computation. Target-direction statements additionally require a fixed unit direction, either the target-token coordinate axis or a unit contrast direction.

\subsection{Diagnostics and Distortion Bounds}
The same diagnostics control how far measured coordinates can deviate from intended anchor-span coordinates.

\paragraph{Diagnostics.}
The key observables are the interface mismatch $\norm{U_{\mathcal B}^\top E_{\mathcal B}-I_K}$ and the relative projection residual
\[
r(h):=h-E_{\mathcal B}E_{\mathcal B}^+h,
\qquad \frac{\|r(h)\|}{\|h\|}.
\]
The first tests stability of synthesis and readout on the anchor span; the second tests whether the analyzed state lies mostly inside that span. Anchor choice therefore determines the semantic alternatives, interface conditioning, and reliability of later trajectory bounds.

\begin{proposition}[Anchor-span interface distortion]
\label{prop:coord_distortion}
Assume \cref{ass:interface} and use the restricted linear readout coordinates $\phi_{\mathrm{rd}}(h)=U_{\mathcal{B}}^\top h$.
For any coordinate vector $c\in\R^K$, let $h_c:=E_{\mathcal{B}}c$ be its recomposition into the anchor subspace.
Then
\[
\norm{\phi_{\mathrm{rd}}(h_c)-c}=\norm{(U_{\mathcal{B}}^\top E_{\mathcal{B}}-I_K)c}\le \varepsilon\,\norm{c}.
\]
More generally, for any decomposition $h=E_{\mathcal{B}}c+r$,
\[
\norm{\phi_{\mathrm{rd}}(h)-c}\le \varepsilon\,\norm{c}+\norm{U_{\mathcal{B}}}\,\norm{r}.
\]
\end{proposition}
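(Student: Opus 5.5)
The plan is a direct linear-algebra computation from the definition $\phi_{\mathrm{rd}}(h)=U_{\mathcal B}^\top h$ together with \cref{ass:interface}. The first claim needs only linearity and the operator-norm bound. The second claim adds one triangle inequality to control the out-of-span remainder. No earlier proposition is needed beyond the assumption itself; \cref{prop:approx_pit} merely supplies sufficient conditions under which $\varepsilon$ is small.

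First, for $h_c=E_{\mathcal B}c$, compute $\phi_{\mathrm{rd}}(h_c)=U_{\mathcal B}^\top E_{\mathcal B}c$. Subtracting $c=I_Kc$ then gives the stated identity
\[
\phi_{\mathrm{rd}}(h_c)-c=(U_{\mathcal B}^\top E_{\mathcal B}-I_K)c .
\]
Taking norms and using that $\norm{\cdot}$ on matrices is the induced operator norm, we get $\norm{(U_{\mathcal B}^\top E_{\mathcal B}-I_K)c}\le\norm{U_{\mathcal B}^\top E_{\mathcal B}-I_K}\,\norm{c}\le\varepsilon\norm{c}$ by \cref{ass:interface}.

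Second, for a general decomposition $h=E_{\mathcal B}c+r$, linearity gives $\phi_{\mathrm{rd}}(h)-c=(U_{\mathcal B}^\top E_{\mathcal B}-I_K)c+U_{\mathcal B}^\top r$. The triangle inequality bounds the first term by $\varepsilon\norm{c}$, as in the first step. The second term is at most $\norm{U_{\mathcal B}^\top}\norm{r}$, and $\norm{U_{\mathcal B}^\top}=\norm{U_{\mathcal B}}$ because the spectral norm is invariant under transposition. This yields the bound; with \cref{ass:interface} one may further replace $\norm{U_{\mathcal B}}$ by $C_U$.

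There is no genuine obstacle here; the argument is routine. The only points needing care are that the decomposition $h=E_{\mathcal B}c+r$ is arbitrary, so $c$ need not be $E_{\mathcal B}^+h$ and the bound holds for every choice, and the transpose-invariance of the operator norm. I would add a remark that the natural choice $c=E_{\mathcal B}^+h$, $r=r(h)$ links the bound to the admissibility diagnostics. That choice gives $\norm{\phi_{\mathrm{rd}}(h)-E_{\mathcal B}^+h}\le(\varepsilon C_E+C_U\eta_{\mathrm{proj}})\norm{h}$, using $\norm{E_{\mathcal B}^+h}\le C_E\norm{h}$.
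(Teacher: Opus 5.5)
Your proposal is correct and follows essentially the same route as the paper's proof: you write $\phi_{\mathrm{rd}}(h)-c=(U_{\mathcal B}^\top E_{\mathcal B}-I_K)c+U_{\mathcal B}^\top r$, bound the first term with \cref{ass:interface} and the operator norm, and bound the second with the triangle inequality. Your closing remark with $c=E_{\mathcal B}^+h$ is a valid extra observation that ties the bound to the admissibility diagnostics.
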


\begin{corollary}[Trajectory distortion under bounded drift]
\label{cor:trajectory_distortion}
Let $\{h_\ell\}_{\ell=0}^L$ be any residual trajectory and suppose each state decomposes as $h_\ell=E_{\mathcal{B}}c_\ell+r_\ell$.
Let $z_\ell=\phi_{\mathrm{rd}}(h_\ell)$ and $\Delta c_\ell:=c_{\ell+1}-c_\ell$.
Then
\[
\norm{z_\ell-c_\ell}\le \varepsilon\norm{c_\ell}+\norm{U_{\mathcal{B}}}\norm{r_\ell},
\qquad
\norm{\Delta z_\ell-\Delta c_\ell}\le \varepsilon(\norm{c_{\ell+1}}+\norm{c_\ell})+\norm{U_{\mathcal{B}}}(\norm{r_{\ell+1}}+\norm{r_\ell}).
\]
\end{corollary}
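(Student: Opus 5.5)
The plan is to apply \cref{prop:coord_distortion} once per layer and then combine two consecutive layers with the triangle inequality.

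\emph{Pointwise bound.} Fix $\ell\in\{0,\dots,L\}$. The hypothesis supplies a decomposition $h_\ell=E_{\mathcal B}c_\ell+r_\ell$, and $z_\ell=\phi_{\mathrm{rd}}(h_\ell)=U_{\mathcal B}^\top h_\ell$. This is exactly the setting of the general clause of \cref{prop:coord_distortion} with $c=c_\ell$ and $r=r_\ell$. It gives
\[
\norm{z_\ell-c_\ell}\le\varepsilon\norm{c_\ell}+\norm{U_{\mathcal B}}\norm{r_\ell}.
\]
For completeness, the identity behind this is
\[
z_\ell-c_\ell=(U_{\mathcal B}^\top E_{\mathcal B}-I_K)c_\ell+U_{\mathcal B}^\top r_\ell .
\]
We then use the operator-norm bound from \cref{ass:interface} on the first term and submultiplicativity on the second, with $\norm{U_{\mathcal B}^\top}=\norm{U_{\mathcal B}}$.

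\emph{Increment bound.} Write
\[
\Delta z_\ell-\Delta c_\ell=(z_{\ell+1}-c_{\ell+1})-(z_\ell-c_\ell).
\]
Apply the triangle inequality and then the pointwise bound at depths $\ell+1$ and $\ell$. Summing the two right-hand sides gives
\[
\varepsilon(\norm{c_{\ell+1}}+\norm{c_\ell})+\norm{U_{\mathcal B}}(\norm{r_{\ell+1}}+\norm{r_\ell}),
\]
which is the stated bound.

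No step is a real obstacle, since the argument is linear and uses only the triangle inequality. One remark is worth including. By linearity, $\Delta z_\ell-\Delta c_\ell=(U_{\mathcal B}^\top E_{\mathcal B}-I_K)\Delta c_\ell+U_{\mathcal B}^\top(r_{\ell+1}-r_\ell)$, which gives the sharper bound $\varepsilon\norm{\Delta c_\ell}+\norm{U_{\mathcal B}}\norm{r_{\ell+1}-r_\ell}$. The stated form follows from this one after applying the triangle inequality once more.

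One point to check: the decomposition is arbitrary, so $c_\ell$ need not equal $E_{\mathcal B}^+h_\ell$. Nothing above uses a canonical choice, so the bound holds for any decomposition. This matches the generality in which \cref{prop:coord_distortion} is stated.
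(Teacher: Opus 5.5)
Your proof is correct and follows the paper's argument: the state bound is the general clause of Proposition~\ref{prop:coord_distortion} applied at each layer, and the step bound comes from writing $\Delta z_\ell-\Delta c_\ell=(z_{\ell+1}-c_{\ell+1})-(z_\ell-c_\ell)$ and applying the triangle inequality. Your side remark is also valid: by linearity you get the sharper bound $\varepsilon\norm{\Delta c_\ell}+\norm{U_{\mathcal B}}\norm{r_{\ell+1}-r_\ell}$, and one more use of the triangle inequality turns it into the stated form.
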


These bounds identify the regime in which measured semantic motion is attributable to the residual trajectory rather than to interface error or off-span residuals. Section~\ref{sec:dynamics} then studies which dynamical features remain stable under admissible changes of frame.

\section{Layerwise Semantic Dynamics}
\label{sec:dynamics}
With a checked SemRF, residual-stream computation induces a depthwise trajectory in the semantic-basis coordinates fixed in Section~\ref{sec:setup}. For a prediction event $(\theta,x,y)$, write $z_{0:L}$ for this trajectory. This section defines step observables, proves their stability under admissible frame changes, and introduces the tube-constrained action used in Section~\ref{sec:results}.

\subsection{Step Structure and Path Imbalance}
The semantic anchors induce Voronoi cells, so each layer has a coarse state $\kappa_\ell:=\kappa(h_\ell)$ and margin $\operatorname{mar}(h_\ell)$ to the nearest Voronoi face. The coordinates $z_\ell$ retain the within-cell motion and the evidence changes that drive cell crossings. For a trajectory $\{h_\ell\}_{\ell=0}^L$, define the induced semantic field at layer $\ell$ by
\begin{equation}
F_\ell(h):=\phi\!\bigl(h+u_\ell(h)\bigr)-\phi(h)\in\R^K.
\label{eq:field_def}
\end{equation}
Along the realized trajectory, $F_\ell(h_\ell)=\Delta z_\ell$.

\begin{assumption}[Local linearization]
\label{ass:linearize}
Along trajectories of interest, $\phi$ admits a first-order approximation with bounded Jacobian $J_\phi$:
$\phi(h+\delta)=\phi(h)+J_\phi(h)\delta + r(h,\delta)$ with $\norm{r(h,\delta)}\le C_r\norm{\delta}^2$.
\end{assumption}

\begin{proposition}[Induced-step linearization]
\label{prop:first_order}
Under \cref{ass:linearize}, the induced semantic step satisfies
\[
\Delta z_\ell = J_\phi(h_\ell)\,\Delta h_\ell + \varepsilon_\ell,
\qquad \norm{\varepsilon_\ell}\le C_r \norm{\Delta h_\ell}^2.
\]
\end{proposition}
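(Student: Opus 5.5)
The proof will be a direct instantiation of \cref{ass:linearize} along the realized trajectory. Since the residual recursion gives $h_{\ell+1}=h_\ell+u_\ell(h_\ell)=h_\ell+\Delta h_\ell$, the induced step is
\[
\Delta z_\ell=z_{\ell+1}-z_\ell=\phi(h_\ell+\Delta h_\ell)-\phi(h_\ell),
\]
which is also $F_\ell(h_\ell)$ from \eqref{eq:field_def}.

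First, apply the first-order expansion of \cref{ass:linearize} with base point $h=h_\ell$ and displacement $\delta=\Delta h_\ell$. This gives
\[
\phi(h_\ell+\Delta h_\ell)=\phi(h_\ell)+J_\phi(h_\ell)\Delta h_\ell+r(h_\ell,\Delta h_\ell).
\]
Subtracting $\phi(h_\ell)$ and defining $\varepsilon_\ell:=r(h_\ell,\Delta h_\ell)$ yields the displayed identity. The bound $\norm{\varepsilon_\ell}\le C_r\norm{\Delta h_\ell}^2$ is then exactly the remainder bound of the assumption evaluated at $\delta=\Delta h_\ell$.

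The only point that needs care is whether the assumption applies at this pair $(h_\ell,\Delta h_\ell)$. \cref{ass:linearize} is stated along trajectories of interest, so I will read it as holding for base points $h_\ell$ on the analyzed trajectory with their realized increments. If one instead wants the assumption stated only for small $\delta$, the same argument goes through once $\norm{\Delta h_\ell}$ lies within that radius.

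As a consistency check, take the linear readout $\phi_{\mathrm{rd}}(h)=U_{\mathcal B}^\top h$. Then $J_\phi=U_{\mathcal B}^\top$, one may take $C_r=0$, and the remainder vanishes, so $\Delta z_\ell=U_{\mathcal B}^\top\Delta h_\ell$ exactly.
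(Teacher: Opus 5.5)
Your proof is correct and matches the paper's: both apply \cref{ass:linearize} at $h=h_\ell$, $\delta=\Delta h_\ell$, subtract $\phi(h_\ell)$, and rename the remainder $\varepsilon_\ell$. Your remark on the assumption's scope along the trajectory and your linear-readout check, where $C_r=0$, are accurate but not needed.
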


Thus each semantic step is the first-order image of the residual write, up to a controlled quadratic remainder.

For a fixed context $x$, let $e_y$ be a fixed unit direction in the chosen coordinates, either the target-token coordinate axis when included in the anchor set or a unit contrast direction. Define the \emph{signed target-direction contribution}
\[
a_\ell := \ip{\Delta z_\ell}{e_y}.
\]
The contribution profile $\{a_\ell\}$ sums to the net target-direction displacement,
\[
\sum_{\ell=0}^{L-1} a_\ell = \ip{z_L-z_0}{e_y},
\]
and records where target-direction displacement is accumulated, delayed, or reversed.

We use \emph{semantic imbalance} for trajectories in which later layers partially undo earlier progress before reaching the same endpoint. Three diagnostics make this precise:
\[
\text{motion energy } \sum_{\ell=0}^{L-1}\norm{\Delta z_\ell}^2,
\qquad
\text{curvature energy } \sum_{\ell=1}^{L-1}\norm{\Delta^2 z_\ell}^2,
\]
where $\Delta^2 z_\ell:=z_{\ell+1}-2z_\ell+z_{\ell-1}$ is the discrete second difference, together with the cumulative target-direction backtracking penalty
\[
\sum_{\ell=0}^{L-1}\bigl[-\ip{\Delta z_\ell}{e_y}\bigr]_+.
\]
The first tracks total semantic travel, the second tracks oscillation across depth, and the third isolates target-direction reversal. They flag detours, low-margin crossings, and oscillation between alternatives, without by themselves proving that a crossing is unnecessary.

\subsection{Cross-Frame Stability of Dynamics}
We next show that admissible frame changes preserve the coordinate geometry and step structure used below.

\begin{proposition}[Anchor-span coordinate equivalence]
\label{prop:coord_equivalence}
Let $E_{\mathcal{B}}\in\R^{d\times K}$ have full column rank and consider two restricted coordinate maps $\phi(h)=U_{\mathcal{B}}^\top h$ and $\tilde \phi(h)=\tilde U_{\mathcal{B}}^\top h$.
Assume both satisfy the restricted interface condition with errors $\varepsilon,\tilde\varepsilon<1$:
\[
\norm{U_{\mathcal{B}}^\top E_{\mathcal{B}}-I_K}\le \varepsilon,
\qquad
\norm{\tilde U_{\mathcal{B}}^\top E_{\mathcal{B}}-I_K}\le \tilde\varepsilon.
\]
Then on the anchor subspace $\mathrm{span}(E_{\mathcal{B}})$ the two coordinate systems are related by a near-identity linear transform. For any $h=E_{\mathcal{B}}c$,
\[
\tilde \phi(h)=\bigl(I_K + \Delta\bigr)\,\phi(h),
\qquad
\norm{\Delta}\le\frac{\varepsilon+\tilde\varepsilon}{1-\varepsilon}.
\]
\end{proposition}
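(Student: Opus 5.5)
Write $M:=U_{\mathcal B}^\top E_{\mathcal B}$ and $\tilde M:=\tilde U_{\mathcal B}^\top E_{\mathcal B}$, both $K\times K$. Restricted to the anchor subspace, both coordinate maps act through these matrices: for $h=E_{\mathcal B}c$ we have $\phi(h)=Mc$ and $\tilde\phi(h)=\tilde Mc$. The plan is to eliminate $c$ by inverting $M$, read off the transform $I_K+\Delta:=\tilde M M^{-1}$, and bound $\Delta$ by a Neumann-series argument.

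\textbf{Step 1 (invertibility of $M$).} Write $M=I_K+A$ with $\norm{A}\le\varepsilon<1$. Then $M$ is invertible with $M^{-1}=\sum_{k\ge0}(-A)^k$, and therefore
\[
\norm{M^{-1}}\le \frac{1}{1-\varepsilon}.
\]
Since $E_{\mathcal B}$ has full column rank, each $h\in\mathrm{span}(E_{\mathcal B})$ has a unique $c$. This gives $c=M^{-1}\phi(h)$, and hence $\tilde\phi(h)=\tilde M M^{-1}\phi(h)$.

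\textbf{Step 2 (bounding $\Delta$).} Set $\tilde M=I_K+\tilde A$ with $\norm{\tilde A}\le\tilde\varepsilon$. Then
\[
\Delta=\tilde M M^{-1}-I_K=(\tilde M-M)M^{-1}=(\tilde A-A)M^{-1}.
\]
Submultiplicativity and the triangle inequality give
\[
\norm{\Delta}\le(\norm{\tilde A}+\norm{A})\norm{M^{-1}}\le\frac{\varepsilon+\tilde\varepsilon}{1-\varepsilon},
\]
which is exactly the claimed bound.

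The argument is routine. The only point needing care is Step 1. The hypothesis $\varepsilon<1$ is what guarantees $M$ is invertible and controls $\norm{M^{-1}}$. The transform must be expressed in terms of $\phi(h)$ rather than $c$, which is why we use $(\tilde M-M)M^{-1}$ and not $\tilde M-M$ alone. Note also that $\tilde\varepsilon<1$ is not needed for the bound itself. It only makes the transform invertible in the reverse direction, with the symmetric bound obtained by swapping the roles of $\phi$ and $\tilde\phi$.
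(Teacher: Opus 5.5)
Your proof is correct and follows the paper's argument essentially line for line. The paper also uses the Neumann series on $U_{\mathcal B}^\top E_{\mathcal B}$ to bound its inverse by $1/(1-\varepsilon)$, writes $\tilde\phi(h)=\tilde A A^{-1}\phi(h)$, and bounds $\Delta=(\tilde A-A)A^{-1}$ via $\|\tilde A-A\|\le\varepsilon+\tilde\varepsilon$; your remark that $\tilde\varepsilon<1$ is not needed for this one-directional bound is also correct.
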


\begin{corollary}[Relational stability]\label{cor:relational_stability}
Under the hypotheses of Proposition~\ref{prop:coord_equivalence}, let
\[
\eta:=\frac{\varepsilon+\tilde\varepsilon}{1-\varepsilon}.
\]
For any $u,v\in \mathrm{span}(E_{\mathcal B})$,
\[
\bigl|\ip{\tilde\phi(u)}{\tilde\phi(v)}-\ip{\phi(u)}{\phi(v)}\bigr|
\le (2\eta+\eta^2)\,\norm{\phi(u)}\,\norm{\phi(v)},
\]
and
\[
\norm{\tilde\phi(u)-\tilde\phi(v)}
\le (1+\eta)\,\norm{\phi(u)-\phi(v)}.
\]
If $\eta<1$, the matching lower bound also holds:
\[
(1-\eta)\,\norm{\phi(u)-\phi(v)}
\le \norm{\tilde\phi(u)-\tilde\phi(v)}.
\]
Thus pairwise similarities and distances are stable when the interface errors are small.
\end{corollary}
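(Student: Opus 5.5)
The plan is to derive everything from the single linear relation supplied by Proposition~\ref{prop:coord_equivalence}. For $u,v\in\mathrm{span}(E_{\mathcal B})$ we have $\tilde\phi(u)=(I_K+\Delta)\phi(u)$ and $\tilde\phi(v)=(I_K+\Delta)\phi(v)$ with the \emph{same} matrix $\Delta$ and $\norm{\Delta}\le\eta$. The point to check is that $\Delta$ depends only on $U_{\mathcal B},\tilde U_{\mathcal B},E_{\mathcal B}$ and not on the vector. In the proof of Proposition~\ref{prop:coord_equivalence} it is presumably $\Delta=\tilde U_{\mathcal B}^\top E_{\mathcal B}(U_{\mathcal B}^\top E_{\mathcal B})^{-1}-I_K$, which is well defined because $\varepsilon<1$ makes $U_{\mathcal B}^\top E_{\mathcal B}$ invertible by a Neumann series. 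Since $\phi$ and $\tilde\phi$ are linear and $u-v$ again lies in the span, the relation also applies to differences: $\tilde\phi(u)-\tilde\phi(v)=(I_K+\Delta)(\phi(u)-\phi(v))$.

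For the inner-product bound, write $a=\phi(u)$ and $b=\phi(v)$ and expand
\[
\ip{(I+\Delta)a}{(I+\Delta)b}-\ip{a}{b}=\ip{\Delta a}{b}+\ip{a}{\Delta b}+\ip{\Delta a}{\Delta b}.
\]
Apply Cauchy--Schwarz and the operator-norm bound to each term. The first two terms give at most $\eta\norm{a}\norm{b}$ each, and the third gives at most $\eta^2\norm{a}\norm{b}$. Together this yields $(2\eta+\eta^2)\norm{\phi(u)}\norm{\phi(v)}$.

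For the distance bounds, set $w=\phi(u)-\phi(v)$. The upper bound follows from $\norm{(I+\Delta)w}\le(1+\norm{\Delta})\norm{w}\le(1+\eta)\norm{w}$. The lower bound follows from the reverse triangle inequality: $\norm{(I+\Delta)w}\ge\norm{w}-\norm{\Delta w}\ge(1-\eta)\norm{w}$. The hypothesis $\eta<1$ is only needed so that this statement is nontrivial; the inequality itself holds regardless.

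The only real obstacle is confirming that Proposition~\ref{prop:coord_equivalence} gives a vector-independent $\Delta$ valid on the whole span rather than a pointwise perturbation. If it did not, I would re-derive $\Delta$ as above. On the span, $c=(U_{\mathcal B}^\top E_{\mathcal B})^{-1}\phi(h)$, so $\tilde\phi(h)=\tilde U_{\mathcal B}^\top E_{\mathcal B}c$. Writing $\tilde U_{\mathcal B}^\top E_{\mathcal B}=I+\tilde R$ and $(U_{\mathcal B}^\top E_{\mathcal B})^{-1}=I+S$ with $\norm{S}\le\varepsilon/(1-\varepsilon)$ gives
\[
\norm{\Delta}\le\norm{\tilde R}+\norm{S}+\norm{\tilde R}\norm{S}\le\frac{\varepsilon+\tilde\varepsilon}{1-\varepsilon},
\]
which is the stated $\eta$. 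The corollary's estimates then follow directly from this single matrix.
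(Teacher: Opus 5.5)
Your proposal is correct and matches the paper's proof step for step. You take the single vector-independent $\Delta=\tilde A A^{-1}-I$ (with $A=U_{\mathcal B}^\top E_{\mathcal B}$, $\tilde A=\tilde U_{\mathcal B}^\top E_{\mathcal B}$) from Proposition~\ref{prop:coord_equivalence}, bound the three cross terms by Cauchy--Schwarz, and obtain the distance bounds from $\tilde\phi(u)-\tilde\phi(v)=(I+\Delta)(\phi(u)-\phi(v))$ with the triangle and reverse triangle inequalities. Your fallback bound $\norm{\Delta}\le\norm{\tilde R}+\norm{S}+\norm{\tilde R}\norm{S}$ reaches the same constant as the paper's factorization $\Delta=(\tilde A-A)A^{-1}$, and you are right that the lower bound holds trivially when $\eta\ge 1$.
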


\begin{assumption}[Lipschitz stability]
\label{ass:lipschitz}
The coordinate map $\phi$ is $L_\phi$-Lipschitz. For all $h,h'\in\R^d$,
\[
\norm{\phi(h)-\phi(h')} \le L_\phi \norm{h-h'}.
\]
\end{assumption}

\begin{theorem}[Semantic-step stability bound]
\label{thm:stepbound}
Under \cref{ass:lipschitz}, for any residual update $\Delta h_\ell$,
\[
\norm{\Delta z_\ell}=\norm{\phi(h_\ell+\Delta h_\ell)-\phi(h_\ell)}\le L_\phi \norm{\Delta h_\ell}.
\]
In particular, if $\norm{\Delta h_\ell}\le U$ for all $\ell$, then $\norm{\Delta z_\ell}\le L_\phi U$ for all $\ell$.
\end{theorem}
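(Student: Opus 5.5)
The plan is to apply \cref{ass:lipschitz} directly to the pair of consecutive states $h_\ell$ and $h_{\ell+1}$. No further structure of the frame is needed: neither the interface condition of \cref{ass:interface} nor the linearization of \cref{ass:linearize} enters.

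The first step rewrites the semantic step using the residual recursion. Since $h_{\ell+1}=h_\ell+\Delta h_\ell$ and $z_\ell=\phi(h_\ell)$, we have
\[
\Delta z_\ell=z_{\ell+1}-z_\ell=\phi(h_\ell+\Delta h_\ell)-\phi(h_\ell),
\]
which is the equality in the statement. Along the realized trajectory this is also $F_\ell(h_\ell)$ from \eqref{eq:field_def}.

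The second step takes $h=h_\ell+\Delta h_\ell$ and $h'=h_\ell$ in the Lipschitz inequality. This gives
\[
\norm{\Delta z_\ell}\le L_\phi\norm{(h_\ell+\Delta h_\ell)-h_\ell}=L_\phi\norm{\Delta h_\ell}.
\]

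The uniform claim then follows by monotonicity. If $\norm{\Delta h_\ell}\le U$ for every $\ell$, the bound from the second step gives $\norm{\Delta z_\ell}\le L_\phi U$ for every $\ell\in\{0,\dots,L-1\}$.

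There is no real obstacle here. The only point to check is that the Lipschitz bound in \cref{ass:lipschitz} is global, holding for all $h,h'\in\R^d$. This means no locality or smallness condition on $\Delta h_\ell$ is needed, in contrast to \cref{prop:first_order}, where the quadratic remainder required control on $\norm{\Delta h_\ell}$.

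It may also be worth remarking how the constant looks in a concrete frame. For the restricted readout $\phi_{\mathrm{rd}}(h)=U_{\mathcal B}^\top h$, one can take $L_\phi=\norm{U_{\mathcal B}}\le C_U$. The bound then inherits the frame constant from \cref{ass:interface}.
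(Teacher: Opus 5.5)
Your proposal is correct and follows the paper's proof: apply \cref{ass:lipschitz} with $h=h_\ell+\Delta h_\ell$ and $h'=h_\ell$, then substitute $\norm{\Delta h_\ell}\le U$ to get the uniform bound. Your aside that the linear readout $\phi_{\mathrm{rd}}(h)=U_{\mathcal B}^\top h$ admits $L_\phi=\norm{U_{\mathcal B}}\le C_U$ is also correct, though the paper does not state it.
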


\begin{theorem}[Frame-to-step stability transfer]
\label{thm:framestab}
Consider two linear-readout coordinate maps $\phi(h)=U_{\mathcal B}^\top h$ and $\tilde\phi(h)=\tilde U_{\mathcal B}^\top h$ built on the same anchor synthesis $E_{\mathcal{B}}\in\R^{d\times K}$, and assume both satisfy the restricted bi-invertibility condition (\cref{ass:interface}) with errors $\varepsilon,\tilde\varepsilon<1$. Define
\[
C_{\mathrm{frame}} := \frac{\varepsilon+\tilde\varepsilon}{\sigma_{\min}(E_{\mathcal{B}})}.
\]
Then on the anchor subspace $\mathrm{span}(E_{\mathcal{B}})$ the two coordinate systems are uniformly close, and for any $h=E_{\mathcal{B}}c$,
\[
\norm{\phi(h)-\tilde\phi(h)} \le C_{\mathrm{frame}}\norm{h}.
\]
Moreover, for any anchor-subspace update $\Delta h=E_{\mathcal{B}}\Delta c$,
\[
\norm{(\phi(h+\Delta h)-\phi(h))-(\tilde\phi(h+\Delta h)-\tilde\phi(h))}
\le C_{\mathrm{frame}}\norm{\Delta h}.
\]
\end{theorem}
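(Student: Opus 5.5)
The plan is to work directly on the anchor subspace. There, both linear readouts act as near-inverses of the same synthesis matrix $E_{\mathcal B}$, so their difference can be written through the two interface errors. Fix $h=E_{\mathcal B}c$. Then
\[
\phi(h)-\tilde\phi(h)=(U_{\mathcal B}^\top E_{\mathcal B}-I_K)c-(\tilde U_{\mathcal B}^\top E_{\mathcal B}-I_K)c ,
\]
obtained by adding and subtracting $c$. By the triangle inequality and \cref{ass:interface} for each frame,
\[
\norm{\phi(h)-\tilde\phi(h)}\le(\varepsilon+\tilde\varepsilon)\norm{c}.
\]
This is the same decomposition that underlies \cref{prop:coord_distortion}, applied once to each frame.

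The remaining step converts $\norm{c}$ into $\norm{h}$. Since $E_{\mathcal B}$ has full column rank, $c=E_{\mathcal B}^+h$ is uniquely determined by $h$. Also $\norm{E_{\mathcal B}c}\ge\sigma_{\min}(E_{\mathcal B})\norm{c}$ with $\sigma_{\min}(E_{\mathcal B})>0$, so $\norm{c}\le\norm{h}/\sigma_{\min}(E_{\mathcal B})$. Equivalently, $\norm{E_{\mathcal B}^+}=1/\sigma_{\min}(E_{\mathcal B})$, which matches the constant $C_E$ in \cref{ass:interface}. Combining the two bounds gives $\norm{\phi(h)-\tilde\phi(h)}\le C_{\mathrm{frame}}\norm{h}$. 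The hypothesis $\varepsilon,\tilde\varepsilon<1$ is not needed for this bound. It only guarantees invertibility for the near-identity form in \cref{prop:coord_equivalence}, and I will say so explicitly.

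For the step statement, I use linearity of both maps. The difference of increments equals $(U_{\mathcal B}^\top-\tilde U_{\mathcal B}^\top)\Delta h$, which is exactly $\phi(\Delta h)-\tilde\phi(\Delta h)$. Because $\Delta h=E_{\mathcal B}\Delta c$ lies in $\mathrm{span}(E_{\mathcal B})$, the first part applies with $h$ replaced by $\Delta h$, giving the bound $C_{\mathrm{frame}}\norm{\Delta h}$. Neither $h$ nor $h+\Delta h$ needs to lie in the anchor span for this step; only $\Delta h$ does, and I will note that this is slightly more general than stated.

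There is no real obstacle. The only point needing care is the identification $\norm{c}\le\norm{h}/\sigma_{\min}(E_{\mathcal B})$. It relies on full column rank, so that the decomposition $h=E_{\mathcal B}c$ is unique and the bound does not depend on the choice of $c$.
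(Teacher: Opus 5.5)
Your proposal is correct and follows the paper's proof essentially step for step. It uses the same decomposition $(U_{\mathcal B}^\top E_{\mathcal B}-I_K)c-(\tilde U_{\mathcal B}^\top E_{\mathcal B}-I_K)c$, the same conversion $\norm{c}\le\norm{h}/\sigma_{\min}(E_{\mathcal B})$, and the same reduction of the step bound, via linearity, to the first part applied to $\Delta h$; your side remarks that $\varepsilon,\tilde\varepsilon<1$ is never used and that only $\Delta h$ (not $h$) needs to lie in $\mathrm{span}(E_{\mathcal B})$ are also correct.
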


Together, these results show that, under admissibility, nearby frames preserve the anchor-span geometry and induced step profiles used below. Continuous semantic steps can then be compared across depth without being dominated by readout drift. Voronoi assignments are reliable when their margins dominate frame distortion; for example, if $\|\tilde\phi(h)-\phi(h)\|_\infty\le \delta$ and $\operatorname{mar}(h)>2\delta$, then $\tilde\kappa(h)=\kappa(h)$.

\subsection{Endpoint-Matched Path Action}
We now introduce the comparison path. Endpoint matching makes pathwise deviation comparable. The semantic tube keeps the path on the trusted coarse Voronoi route up to margin relaxations, without copying the observed coordinates exactly.

\begin{proposition}[Contribution-profile telescoping]
\label{prop:contrib_tel}
Let $z^{\mathrm{obs}}_{0:L}$ be an observed teacher-forced trajectory and let $z^{\mathrm{cmp}}_{0:L}$ be any endpoint-matched comparison path. Assume
\[
z^{\mathrm{obs}}_0=z^{\mathrm{cmp}}_0,
\qquad
z^{\mathrm{obs}}_L=z^{\mathrm{cmp}}_L.
\]
Define
\[
g_\ell:=\ip{\Delta z^{\mathrm{obs}}_\ell-\Delta z^{\mathrm{cmp}}_\ell}{e_y}.
\]
Then
\[
\sum_{\ell=0}^{L-1} g_\ell = 0,
\qquad
\sum_{\ell=0}^{j-1} g_\ell = \ip{z^{\mathrm{obs}}_j-z^{\mathrm{cmp}}_j}{e_y},\quad j=1,\dots,L.
\]
\end{proposition}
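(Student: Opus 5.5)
The plan is a direct telescoping argument using only linearity of the inner product in its first argument. Write $D_j:=z^{\mathrm{obs}}_j-z^{\mathrm{cmp}}_j$ for $j=0,\dots,L$. By the definition of the increments,
\[
\Delta z^{\mathrm{obs}}_\ell-\Delta z^{\mathrm{cmp}}_\ell=(z^{\mathrm{obs}}_{\ell+1}-z^{\mathrm{obs}}_\ell)-(z^{\mathrm{cmp}}_{\ell+1}-z^{\mathrm{cmp}}_\ell)=D_{\ell+1}-D_\ell .
\]
Hence $g_\ell=\ip{D_{\ell+1}}{e_y}-\ip{D_\ell}{e_y}$, so each $g_\ell$ is a forward difference of the scalar sequence $d_j:=\ip{D_j}{e_y}$.

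Summing over $\ell=0,\dots,j-1$ for any $1\le j\le L$, the sum telescopes to
\[
\sum_{\ell=0}^{j-1}g_\ell=d_j-d_0 .
\]
The endpoint-matching hypothesis $z^{\mathrm{obs}}_0=z^{\mathrm{cmp}}_0$ gives $D_0=0$, hence $d_0=0$. This yields the partial-sum identity $\sum_{\ell=0}^{j-1}g_\ell=\ip{z^{\mathrm{obs}}_j-z^{\mathrm{cmp}}_j}{e_y}$.

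Taking $j=L$ and using the terminal matching $z^{\mathrm{obs}}_L=z^{\mathrm{cmp}}_L$ gives $D_L=0$, so the total sum vanishes. This is the same telescoping already used for $\sum_\ell a_\ell=\ip{z_L-z_0}{e_y}$, applied to the difference of two paths.

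No step is a real obstacle. The only point needing care is that $e_y$ is a fixed direction independent of $\ell$, so linearity applies uniformly across layers. The argument uses no admissibility, tube, or interface assumption; it is purely algebraic.
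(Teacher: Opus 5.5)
Your proof is correct and matches the paper's argument: both telescope the layerwise increments, using $z^{\mathrm{obs}}_0=z^{\mathrm{cmp}}_0$ for the partial-sum identity and both endpoint conditions for the vanishing total. The only cosmetic difference is that you telescope the difference sequence $d_j=\ip{z^{\mathrm{obs}}_j-z^{\mathrm{cmp}}_j}{e_y}$, whereas the paper telescopes each path separately and then subtracts.
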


Thus cumulative contribution-profile mismatch equals intermediate target-direction displacement between the observed path and any endpoint-matched comparison path.

\begin{definition}[Margin-relaxed semantic tube]
\label{def:tube}
Let $z^{\mathrm{obs}}_{0:L}$ be the observed trajectory and let $\tau_\ell:=\kappa(h^{\mathrm{obs}}_\ell)$ be its Voronoi index. Choose a trusted layer set $\mathcal I\subseteq\{1,\dots,L-1\}$, typically the layers whose margins dominate measurement error, and relaxation radii $\rho_\ell\ge 0$. The semantic tube induced by the observed Voronoi trace is
\[
\mathcal T_\rho(z^{\mathrm{obs}})
:=\left\{z_{0:L}:\begin{array}{l}
z_0=z^{\mathrm{obs}}_0,\quad z_L=z^{\mathrm{obs}}_L,\\
z_{\ell,\tau_\ell}\ge z_{\ell,j}-\rho_\ell\quad \forall \ell\in\mathcal I,\ \forall j
\end{array}\right\}.
\]
The constraints are linear dominance constraints in the semantic coordinates, so the tube is a closed convex polyhedron. For a realized trajectory and nonnegative radii, it contains $z^{\mathrm{obs}}$. If $\mathcal I=\varnothing$, the construction reduces to endpoint matching; inactive constraints leave the endpoint-only baseline unchanged.
\end{definition}

\begin{definition}[Path action]\label{def:action}
For a prediction event $(\theta,x,y)$, let $e_y\in\R^K$ be the fixed unit direction used for target-direction diagnostics. We define an action $S[z]$ on depth-discrete trajectories $z_{0:L}$ by
\begin{equation}
\label{eq:action}
S[z]
:= \alpha \sum_{\ell=0}^{L-1}\norm{\Delta z_\ell}^2
\;+\;
\beta \sum_{\ell=1}^{L-1}\norm{\Delta^2 z_\ell}^2
\;+\;
\gamma \sum_{\ell=0}^{L-1}\bigl[-\ip{\Delta z_\ell}{e_y}\bigr]_+,
\end{equation}
with $\alpha,\beta\ge 0$ not both zero, and $\gamma\ge 0$. The terms penalize excess motion, oscillation, and target-direction backtracking.
\end{definition}

If no target-direction diagnostic is used, set $\gamma=0$ and omit the backtracking term. The action scores feasible comparison paths rather than training the model; Section~\ref{sec:results} minimizes it inside the semantic tube.

\section{Traces and Compressibility}
\label{sec:results}
We now pass from descriptive path comparison to a selected baseline. Minimizing the path action from Section~\ref{sec:dynamics} over the margin-relaxed semantic tube produces the canonical trace. Unless stated otherwise, Euler--Lagrange statements concern the quadratic regime $\gamma=0$ on layers where tube inequalities are inactive.

\subsection{Minimum-Action Canonical Traces}
Under teacher forcing, the model induces an observed semantic trajectory $z^{\mathrm{TF}}_{0:L}$ with endpoints $z^{\mathrm{TF}}_0=z_0$ and $z^{\mathrm{TF}}_L=z_L$. With the semantic tube $\mathcal T_\rho(z^{\mathrm{TF}})$ from Definition~\ref{def:tube}, define
\begin{equation}
\label{eq:mintrace}
z^\star \in \arg\min_{z_{0:L}\in\mathcal T_\rho(z^{\mathrm{TF}})} S[z].
\end{equation}
We call $z^\star$ the \emph{canonical trace}: the diagnostic minimum-action comparison path that follows the trusted coarse Voronoi route up to the relaxation radii.

\begin{theorem}[Canonical-trace characterization]
\label{thm:spline}
Consider \eqref{eq:mintrace} with the action \eqref{eq:action}. The semantic tube from Definition~\ref{def:tube} is a nonempty closed convex polyhedron for realized trajectories and nonnegative radii. If $(\alpha,\beta)\neq(0,0)$ with $\alpha,\beta\ge 0$, the quadratic part is strictly convex on endpoint-constrained path variables. Hence the canonical trace is unique for $\gamma=0$ and remains unique after adding the convex backtracking term for $\gamma>0$. For $\gamma=0$, writing
\[
\Delta^2 z_\ell := z_{\ell+1}-2z_\ell+z_{\ell-1},
\qquad
\Delta^4 z_\ell := z_{\ell+2}-4z_{\ell+1}+6z_\ell-4z_{\ell-1}+z_{\ell-2},
\]
the coordinates satisfy, on every inactive interior layer,
\[
\beta\,\Delta^4 z^\star_\ell = \alpha\,\Delta^2 z^\star_\ell,
\qquad \ell=2,\dots,L-2.
\]
At active tube constraints, KKT multipliers for the corresponding linear dominance inequalities are added.
\end{theorem}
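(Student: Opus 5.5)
The plan is to treat \eqref{eq:mintrace} as a finite-dimensional convex program in the interior variables $w:=(z_1,\dots,z_{L-1})\in\R^{K(L-1)}$, with $z_0,z_L$ fixed by the tube. I would proceed in four steps: the structure of the tube, strict convexity, existence and uniqueness, and the Euler--Lagrange/KKT conditions.

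\emph{Step 1: the tube.} Each constraint $z_{\ell,\tau_\ell}-z_{\ell,j}\ge-\rho_\ell$ is a closed halfspace in $w$, and the endpoint conditions are affine equalities. Hence $\mathcal T_\rho(z^{\mathrm{TF}})$ is a finite intersection of closed halfspaces with an affine subspace, which is a closed convex polyhedron. For nonemptiness, take $z^{\mathrm{TF}}$ itself. Since $\tau_\ell=\kappa(h^{\mathrm{TF}}_\ell)$ is the argmax index, $z^{\mathrm{TF}}_{\ell,\tau_\ell}\ge z^{\mathrm{TF}}_{\ell,j}\ge z^{\mathrm{TF}}_{\ell,j}-\rho_\ell$ whenever $\rho_\ell\ge0$. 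The endpoint conditions hold by definition.

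\emph{Step 2: strict convexity.} Write the quadratic part as $Q(z)=\alpha\sum\|\Delta z_\ell\|^2+\beta\sum\|\Delta^2 z_\ell\|^2$. Take two feasible paths and let $v=z-z'$, so $v_0=v_L=0$. It suffices to show that $Q(v)=0$ forces $v=0$; then the Hessian in $w$ is positive definite.
\begin{itemize}
\item If $\alpha>0$, then $Q(v)=0$ gives $\Delta v_\ell=0$ for all $\ell$, so $v$ is constant. Since $v_0=0$, we get $v\equiv0$.
\item If $\alpha=0<\beta$, then $\Delta^2 v_\ell=0$ for $\ell=1,\dots,L-1$, so $v$ is affine in $\ell$ on $\{0,\dots,L\}$. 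Because $v_0=v_L=0$, again $v\equiv0$.
\end{itemize}
In both cases we need $L\ge2$ so that interior variables exist. The case $L\le1$ is trivial: the tube is a single point.

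\emph{Step 3: existence and uniqueness.} A strictly convex quadratic is coercive in $w$. The backtracking term $\gamma\sum[-\ip{\Delta z_\ell}{e_y}]_+$ is continuous, nonnegative and convex, as a positive part of a linear function. So $S$ is continuous, coercive and strictly convex on a nonempty closed convex set. A minimizer therefore exists, by Weierstrass on a sublevel set, and it is unique by strict convexity, for both $\gamma=0$ and $\gamma>0$.

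\emph{Step 4: Euler--Lagrange and KKT for $\gamma=0$.} The constraints are affine, so linear constraint qualification holds and the KKT conditions are necessary and sufficient: $\nabla_{z_\ell}Q(z^\star)$ equals a nonnegative combination of the gradients of the active constraints at layer $\ell$. Differentiating gives the two contributions:
\begin{itemize}
\item the motion term contributes $-2\alpha\,\Delta^2 z_\ell$;
\item the curvature term contributes $2\beta(\Delta^2 z_{\ell-1}-2\Delta^2 z_\ell+\Delta^2 z_{\ell+1})=2\beta\,\Delta^4 z_\ell$, provided $\Delta^2 z_{\ell\pm1}$ both appear in the sum over $1,\dots,L-1$, i.e.\ $2\le\ell\le L-2$.
\end{itemize}
Only the constraints indexed by layer $\ell$ involve $z_\ell$. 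So on a layer where none is active (or $\ell\notin\mathcal I$) the multipliers vanish, and we obtain $\beta\Delta^4 z^\star_\ell=\alpha\Delta^2 z^\star_\ell$. At active layers the multipliers $\lambda_{\ell,j}\ge0$ add the terms $\lambda_{\ell,j}(e_{\tau_\ell}-e_j)$, with complementary slackness.

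The main obstacle I expect is Step 2 in the degenerate case $\alpha=0$, where positive definiteness rests on the affine-null-space argument. The boundary layers $\ell=1,L-1$ also need care: the curvature stencil is truncated there, which is why the spline equation is claimed only for $2\le\ell\le L-2$.
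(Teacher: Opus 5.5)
Your proposal is correct and follows essentially the same route as the paper. Both arguments get positive definiteness of the endpoint-constrained Hessian from the trivial null space of the difference operators under zero endpoints, keep uniqueness by adding the convex backtracking term, and use first-order stationarity on inactive layers to obtain $\beta\Delta^4 z^\star_\ell=\alpha\Delta^2 z^\star_\ell$, with KKT terms at active faces. Your direct gradient computation replaces the paper's summation by parts, and you also make explicit several points the paper leaves implicit: nonemptiness via $z^{\mathrm{TF}}$, existence via coercivity, and why the truncated stencil restricts the equation to $2\le\ell\le L-2$.
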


Here $\alpha$ prices total semantic travel and $\beta$ penalizes oscillatory correction. If the tube has no active constraints, the quadratic canonical trace is straight endpoint interpolation. Otherwise $z^\star$ is obtained from a sparse convex quadratic program with a banded Hessian and layer-local linear inequalities.

\subsection{Excess Action and Deviation Control}
Excess action is a scalar deviation diagnostic: the action gap dominates quadratic deviation energy, with strong-convexity modulus stated in Proposition~\ref{prop:strongconv_appendix}.

\begin{proposition}[Excess-action lower bound]
\label{prop:exact_gap}
Let $z^\star$ be a minimizer of \eqref{eq:mintrace} for the action \eqref{eq:action} with $\gamma\ge 0$, and let $z\in\mathcal T_\rho(z^{\mathrm{TF}})$ be any feasible trajectory. Writing $e_\ell:=z_\ell-z_\ell^\star$, one has
\[
S[z]-S[z^\star]
\ge \alpha\sum_{\ell=0}^{L-1}\norm{\Delta e_\ell}^2
+ \beta\sum_{\ell=1}^{L-1}\norm{\Delta^2 e_\ell}^2.
\]
When $\gamma=0$ and the active normal-cone pairing vanishes in the direction $z-z^\star$, the inequality is an equality.
\end{proposition}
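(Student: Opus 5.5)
The approach is an exact quadratic expansion around $z^\star$ combined with the first-order optimality condition for a convex program over the polyhedron $\mathcal T_\rho(z^{\mathrm{TF}})$. Write $S=Q+\gamma B$, where
\[
Q[z]=\alpha\sum_\ell\norm{\Delta z_\ell}^2+\beta\sum_\ell\norm{\Delta^2 z_\ell}^2
\]
is the quadratic part and $B[z]=\sum_\ell[-\ip{\Delta z_\ell}{e_y}]_+$ is the backtracking term.

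The first step is the quadratic identity. With $e=z-z^\star$ and linearity of $\Delta$ and $\Delta^2$, expanding squares gives
\[
Q[z]-Q[z^\star]=\ip{\nabla Q[z^\star]}{e}+\alpha\sum_\ell\norm{\Delta e_\ell}^2+\beta\sum_\ell\norm{\Delta^2 e_\ell}^2,
\]
where
\[
\ip{\nabla Q[z^\star]}{e}=2\alpha\sum_\ell\ip{\Delta z^\star_\ell}{\Delta e_\ell}+2\beta\sum_\ell\ip{\Delta^2 z^\star_\ell}{\Delta^2 e_\ell}.
\]
This identity holds with no assumptions.

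The second step handles the nonsmooth term. Since $B$ is convex (a sum of positive parts of linear functionals), we pick a subgradient $g\in\partial B[z^\star]$, which gives $B[z]-B[z^\star]\ge\ip{g}{e}$. Adding this to the identity yields
\[
S[z]-S[z^\star]\ge \ip{\nabla Q[z^\star]+\gamma g}{e}+\alpha\sum_\ell\norm{\Delta e_\ell}^2+\beta\sum_\ell\norm{\Delta^2 e_\ell}^2 .
\]

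The third step, which is the main point, is to show that the linear term is nonnegative for the right choice of $g$. The tube is a nonempty closed convex polyhedron, as recorded in \cref{thm:spline}, so optimality of $z^\star$ for the convex problem \eqref{eq:mintrace} gives $0\in\partial S[z^\star]+N_{\mathcal T}(z^\star)$. The subdifferential sum rule applies without further hypotheses because $Q$ is finite and differentiable everywhere, $B$ is finite and convex, and $\mathcal T$ is polyhedral. Hence there is a $g\in\partial B[z^\star]$ with
\[
-(\nabla Q[z^\star]+\gamma g)\in N_{\mathcal T}(z^\star).
\]
Since $z\in\mathcal T$ is feasible, the definition of the normal cone then gives $\ip{\nabla Q[z^\star]+\gamma g}{z-z^\star}\ge0$. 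The endpoint constraints are harmless here: $e_0=e_L=0$, so the equality constraints contribute directions orthogonal to $e$. Dropping the nonnegative linear term proves the inequality.

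For the equality case, set $\gamma=0$. Then the subgradient step is vacuous, and the expansion in the first step is exact. The only slack is the pairing $\ip{\nabla Q[z^\star]}{e}=-\ip{\nu}{e}$, where $\nu\in N_{\mathcal T}(z^\star)$ is the normal vector from the optimality condition; by KKT, $\nu$ is a combination of active dominance-constraint normals with nonnegative multipliers. Under the hypothesis that this pairing vanishes in the direction $z-z^\star$, the linear term is zero and the inequality becomes an equality.

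The step I expect to be the main obstacle is justifying the optimality condition $0\in\partial S+N_{\mathcal T}$ with a single subgradient $g$ shared by both terms. I would invoke the standard convex-analysis sum rule (Rockafellar, Thm.~23.8) in its polyhedral/finite-valued form. As a fallback, I would compare $S$ along the segment $z^\star+t e$ for $t\in[0,1]$, which stays in $\mathcal T$ by convexity. Minimality at $t=0$ forces the one-sided directional derivative $S'(z^\star;e)$ to be nonnegative, and convexity of $B$ gives $B[z]-B[z^\star]\ge B'(z^\star;e)$. Together these give $Q$'s linear term plus $\gamma B'(z^\star;e)\ge0$, so the same bound follows without naming a subgradient.
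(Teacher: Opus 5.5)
Your proposal is correct and follows essentially the same route as the paper. Both split $S=Q+\gamma B$, take the optimality condition $\nabla Q(z^\star)+\gamma s+n=0$ with $s\in\partial B(z^\star)$ and $n\in N_{\mathcal T}(z^\star)$, combine the subgradient inequality for $B$ with $\langle n,e\rangle\le 0$, and identify the remaining Bregman term of $Q$ as $\alpha\sum_\ell\norm{\Delta e_\ell}^2+\beta\sum_\ell\norm{\Delta^2 e_\ell}^2$, with equality for $\gamma=0$ when the normal-cone pairing vanishes. Your appeal to the subdifferential sum rule, and the directional-derivative fallback that avoids it entirely, justify the step the paper simply asserts as ``optimality gives''.
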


\begin{corollary}[Deviation controls from excess action]
\label{cor:excess_controls}
In the setting of Proposition~\ref{prop:exact_gap}, define $\Delta S:=S[z]-S[z^\star]$. Then
\[
\alpha\sum_{\ell=0}^{L-1}\norm{\Delta z_\ell-\Delta z_\ell^\star}^2
\le \Delta S,
\qquad
\beta\sum_{\ell=1}^{L-1}\norm{\Delta^2 z_\ell-\Delta^2 z_\ell^\star}^2
\le \Delta S.
\]
If the endpoint-constrained problem has strong-convexity modulus $\mu>0$, then
\[
\sum_{\ell=1}^{L-1}\norm{z_\ell-z_\ell^\star}^2\le \frac{2\Delta S}{\mu}.
\]
For $z=z^{\mathrm{obs}}$, define $g_\ell:=\ip{\Delta z^{\mathrm{obs}}_\ell-\Delta z^\star_\ell}{e_y}$. If $\alpha>0$, then
\[
\max_{1\le j\le L}\left|\sum_{\ell=0}^{j-1} g_\ell\right|
\le \sum_{\ell=0}^{L-1}|g_\ell|
\le \sqrt{\frac{L}{\alpha}}\,\Delta S^{1/2}.
\]
Thus excess action controls step mismatch, curvature mismatch, interior displacement, and cumulative target-direction contribution-profile mismatch whenever the corresponding weights are positive.
\end{corollary}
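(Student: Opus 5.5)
The corollary follows from Proposition~\ref{prop:exact_gap} by reading off its two terms and then chaining elementary inequalities. Set $e_\ell := z_\ell - z^\star_\ell$. Because the difference operators are linear, $\Delta e_\ell = \Delta z_\ell - \Delta z^\star_\ell$ and $\Delta^2 e_\ell = \Delta^2 z_\ell - \Delta^2 z^\star_\ell$.

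\textbf{Step and curvature mismatch.} Both terms on the right of Proposition~\ref{prop:exact_gap} are nonnegative. Dropping either one leaves the other bounded by $\Delta S$. This gives the step-mismatch and curvature-mismatch bounds directly, and it needs no positivity of $\alpha$ or $\beta$.

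\textbf{Interior displacement.} Here I would use strong convexity as the paper supplies it in Proposition~\ref{prop:strongconv_appendix}. Let $S_q$ denote the quadratic part of the action. On the endpoint-constrained variables $(z_1,\dots,z_{L-1})$ with $e_0 = e_L = 0$, the map $e \mapsto \alpha\sum\norm{\Delta e_\ell}^2 + \beta\sum\norm{\Delta^2 e_\ell}^2$ equals $\tfrac12 e^\top \nabla^2 S_q\, e$. If the Hessian is $\succeq \mu I$ on these variables, then this form is at least $\tfrac{\mu}{2}\sum_{\ell=1}^{L-1}\norm{e_\ell}^2$. Combining with Proposition~\ref{prop:exact_gap} yields $\sum\norm{z_\ell - z^\star_\ell}^2 \le 2\Delta S/\mu$.

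The main point to check is that the factor conventions match. If $\mu$ is the Hessian modulus of $S$, the quadratic form is exactly $\tfrac12 e^\top H e \ge \tfrac{\mu}{2}\norm{e}^2$. I would verify this against the appendix definition.

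\textbf{Contribution-profile mismatch.} Take $z = z^{\mathrm{obs}}$, which is feasible because the tube contains the realized trajectory. By Proposition~\ref{prop:contrib_tel}, each partial sum $\sum_{\ell<j} g_\ell$ is a partial sum of the $g_\ell$. The triangle inequality then bounds its absolute value by $\sum_\ell |g_\ell|$, which is the first inequality.

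For the second inequality, use $|g_\ell| = |\ip{\Delta e_\ell}{e_y}| \le \norm{\Delta e_\ell}$, since $e_y$ is a unit vector. Cauchy--Schwarz over $L$ terms then gives
\[
\sum_\ell |g_\ell| \le \sqrt{L}\Bigl(\sum_\ell \norm{\Delta e_\ell}^2\Bigr)^{1/2} \le \sqrt{L}\,(\Delta S/\alpha)^{1/2},
\]
where the last step uses the first bound of the corollary divided by $\alpha > 0$.
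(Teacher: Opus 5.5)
Your proof is correct and follows the paper's argument: the step and curvature bounds are read off from Proposition~\ref{prop:exact_gap}, and the contribution-profile bound uses $|g_\ell|\le\norm{\Delta e_\ell}$, Cauchy--Schwarz over $L$ terms, and the fact that every partial sum is bounded by $\sum_\ell|g_\ell|$, exactly as the paper does. The only cosmetic difference is in the interior-displacement bound: the paper applies $\mu$-strong convexity of $S$ at the constrained minimizer directly, while you lower-bound the quadratic form from Proposition~\ref{prop:exact_gap} by $(\mu/2)\norm{e}^2$ through the Hessian. The factor convention you wanted to verify does match, since the appendix's $\mu_0$ bounds the full Hessian $H=2\alpha(D_1^\top D_1\otimes I_K)+2\beta(D_2^\top D_2\otimes I_K)$.
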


\subsection{Compressibility and Trace Complexity}
The preceding bounds quantify deviation from the canonical trace. We now ask when such a trace admits a compact description: low curvature means that few semantic knots suffice to approximate the full depthwise trajectory.

\begin{definition}[Trajectory compressibility]
\label{def:compress}
A semantic trajectory $z_{0:L}\in(\R^K)^{L+1}$ is \emph{$(m,\varepsilon)$-compressible} if there exists a piecewise-linear trajectory $\tilde z_{0:L}$ with at most $m$ linear segments, whose breakpoints are the semantic knots, such that
\[
\frac{1}{L+1}\sum_{\ell=0}^L \norm{z_\ell-\tilde z_\ell}^2 \le \varepsilon^2.
\]
\end{definition}

\begin{definition}[Frame-relative trace complexity]
For a fixed admissible SemRF and tolerance $\varepsilon$, define $m_\varepsilon(z)$ as the smallest $m$ for which $z$ is $(m,\varepsilon)$-compressible. Up to knot locations, the trace needs $O(K\,m_\varepsilon(z))$ real values rather than $(L+1)K$. We use \emph{local knowledge density} only in this frame-relative sense: within the same frame and tolerance, smaller trace complexity means a more compact description of the same depthwise semantic transport.
\end{definition}

\paragraph{Connection to parameter efficiency.}
For a fixed architecture and dataset $\mathcal D$, parameters $\theta$ induce teacher-forced residual trajectories and hence SemRF trajectories $z_{0:L}(x;\theta)$. Interface alignment, including the pseudo-inverse tying case in Section~\ref{sec:setup}, stabilizes the semantic basis so that parameter settings can be compared through their induced semantic traces rather than through moving readout frames. Define the average trace complexity
\[
\bar m_\varepsilon(\theta;\mathcal D):=\frac{1}{|\mathcal D|}\sum_{x\in\mathcal D} m_\varepsilon\!\left(z^\star(x;\theta)\right),
\]
where $z^\star(x;\theta)$ is the canonical trace inside the tube induced by the trajectory of $\theta$ on $x$. For a fixed frame dimension, a conditional semantic-density proxy is
\[
\mathrm{KD}_{\varepsilon}(\theta;\mathcal D):=\frac{L+1}{\bar m_\varepsilon(\theta;\mathcal D)+1}.
\]
Equivalently, over a feasible family $\Theta_{\mathcal D}$ of admissible parameters satisfying the same data constraints, one may use
\[
\theta^\dagger\in\arg\min_{\theta\in\Theta_{\mathcal D}}
\frac{1}{|\mathcal D|}\sum_{x\in\mathcal D} S\!\left[z^\star(x;\theta)\right]
+\lambda\,\bar m_\varepsilon(\theta;\mathcal D)
\]
with $\lambda\ge0$ as a semantic-efficiency selector. Lower action and lower trace complexity then indicate higher local knowledge density and a more parameter-efficient semantic realization, relative to the parameter-to-trajectory map $\theta\mapsto z_{0:L}(x;\theta)$ and the fixed frame, data, and tolerance.

\begin{theorem}[Curvature-to-compressibility bound]
\label{thm:compress}
Let $z_{0:L}$ be any trajectory with curvature energy
\[
E_2(z):=\sum_{\ell=1}^{L-1}\norm{\Delta^2 z_\ell}^2.
\]
There exists a piecewise-linear $\tilde z_{0:L}$ with at most $m$ segments such that
\[
\frac{1}{L+1}\sum_{\ell=0}^L \norm{z_\ell-\tilde z_\ell}^2 \le \frac{c\,L^3\,E_2(z)}{m^4},
\]
where $c>0$ is a universal constant independent of $K$ and $L$. Hence $z$ is $(m,\varepsilon)$-compressible whenever
\[
m \ge \left(\frac{c\,L^3\,E_2(z)}{\varepsilon^2}\right)^{1/4}.
\]
\end{theorem}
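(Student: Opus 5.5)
The plan is to construct $\tilde z$ explicitly as the piecewise-linear interpolant of $z$ at equally spaced knots. Then the error on each block is controlled by a discrete Green's-function (Taylor-type) estimate that involves only the second differences inside that block.

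First I dispose of the trivial regime. If $m\ge L$, take $\tilde z=z$ with one segment per layer, so the error is $0$. If $E_2(z)=0$, then $z$ is affine in $\ell$ and a single segment suffices. So assume $1\le m<L$. Choose knots $0=k_0<k_1<\dots<k_m=L$ with block lengths $n_i:=k_i-k_{i-1}\le \lceil L/m\rceil\le 2L/m$. Define $\tilde z$ on $[k_{i-1},k_i]$ as the linear interpolation of $z_{k_{i-1}}$ and $z_{k_i}$. This is a continuous piecewise-linear trajectory with $m$ segments and knots $k_i$.

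Next I fix one block, shift it to $\{0,\dots,n\}$, and set $e_\ell:=z_\ell-\tilde z_\ell\in\R^K$. Since $\tilde z$ is affine on the block, $\Delta^2 e_j=\Delta^2 z_j$ for $1\le j\le n-1$, and $e_0=e_n=0$. The discrete Dirichlet problem has the explicit solution
\[
e_\ell=-\sum_{j=1}^{n-1} G(\ell,j)\,\Delta^2 z_j,\qquad G(\ell,j)=\frac{\min(\ell,j)\,(n-\max(\ell,j))}{n},
\]
with $0\le G\le n/4$. Because the representation is linear with scalar weights, it holds verbatim for $\R^K$-valued sequences. The triangle inequality and Cauchy--Schwarz then give
\[
\norm{e_\ell}\le \frac n4\sum_{j=1}^{n-1}\norm{\Delta^2 z_j}\le \frac{n^{3/2}}{4}\Bigl(\sum_{j=1}^{n-1}\norm{\Delta^2 z_j}^2\Bigr)^{1/2}.
\]
Summing over the $n+1$ points of the block, and using $n+1\le 2n$ since $n\ge1$, yields $\sum_\ell\norm{e_\ell}^2\le \tfrac{1}{8}n^4 E_2^{(i)}$. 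Here $E_2^{(i)}$ is the curvature energy over the interior indices of block $i$. No dimension factor enters, which is why $c$ is independent of $K$.

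Finally I sum over blocks. Interior index sets of distinct blocks are disjoint, so $\sum_i E_2^{(i)}\le E_2(z)$. Knot points have zero error and are not double-counted in any harmful way. Hence
\[
\sum_{\ell=0}^L\norm{z_\ell-\tilde z_\ell}^2\le \tfrac18\,(2L/m)^4\,E_2(z)=\frac{2L^4E_2(z)}{m^4}.
\]
Dividing by $L+1\ge L$ gives the bound with $c=2$. The compressibility statement follows by requiring $cL^3E_2/m^4\le\varepsilon^2$ and solving for $m$, together with the trivial case $m\ge L$ for any rounding issues.

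The main obstacle is the per-block estimate. One must get the Green's function and its sup bound right, and also confirm that the knot-based interpolant uses only interior second differences, so that no curvature energy is counted across block boundaries. The rest is bookkeeping of the block lengths $\lceil L/m\rceil$.
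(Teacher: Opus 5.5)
Your proof is correct and follows the paper's route: equal-length blocks with $n_i\le\lceil L/m\rceil$, the piecewise-linear interpolant at the knots, a per-block Dirichlet estimate using only interior second differences, then summation over blocks and division by $L+1$. The differences are minor. You prove the block estimate with the explicit discrete Green's function, which gives $n^4/8$ instead of the paper's $2n^4$ (obtained by averaging first differences) and hence the explicit constant $c=2$. You also dispose of the case $m\ge L$ separately, which the paper's step $\lceil L/m\rceil^4\le c_1L^4/m^4$ implicitly requires.
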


For a quadratic canonical trace with $\gamma=0$ and $\beta>0$, one has $E_2(z^\star)\le S[z^\star]/\beta$. Theorem~\ref{thm:compress} therefore gives $(m,\varepsilon)$-compressibility whenever
\[
m\ge \left(\frac{c\,L^3\,S[z^\star]}{\beta\,\varepsilon^2}\right)^{1/4}.
\]
This justifies the local knowledge-density claim: once measurement, tolerance, and tube constraints are fixed, low-curvature admissible traces need fewer values than the full depth-by-coordinate representation. The Voronoi trace gives the coarse route, while the canonical trace and excess action quantify the remaining continuous motion.

\section{Conclusion}
\label{sec:conclusion}
SemRF isolates a local regime in which residual-stream semantics is well posed. On a validated anchor span, coordinates are stable, the same anchors induce a semantic Voronoi diagram, and admissible frame changes preserve the geometry needed for path comparison. The observed Voronoi sequence defines a margin-relaxed semantic tube; minimizing path action inside it gives a diagnostic canonical trace. With at least one positive quadratic weight, the trace is unique, satisfies a discrete spline equation away from active constraints, and reduces to straight interpolation only when no coarse-state constraint is active. Excess action controls the corresponding step, curvature, interior, and target-direction deviations.
Low curvature yields piecewise-linear compressibility, giving a local, frame- and tolerance-relative notion of knowledge density. Through the parameter-induced trajectory map, this connects to parameter efficiency by ranking lower-action and lower-complexity semantic transport within the same admissible frame and data constraints. These are local guarantees: they require controlled interface error, small projection residual, explicit tube constraints, and a fixed unit direction whenever target-direction diagnostics are used.


\bibliography{custom}

\appendix

\section{Additional Proofs}
\label{app:proofs}

The additional proofs to our research are as follows, grouped by the corresponding technical role.

\subsection{Coordinate Measurement and Interface Synchronization}

\begin{proof}\emph{of Proposition~\ref{prop:pitz_canonical}}.
Because $Z^\top Z=I_d$ and $T$ is invertible,
\[
W_{\mathrm U}^{+}=(ZT^\top)^+=T^{-\top}Z^\top.
\]
Using $W_{\mathrm{in}}=ZT^{-1}$ gives $W_{\mathrm{in}}^\top=T^{-\top}Z^\top$, so $W_{\mathrm U}^{+}=W_{\mathrm{in}}^\top$.
\end{proof}

\begin{proof}\emph{of Proposition~\ref{prop:approx_pit}}.
Let $E_{\mathcal B}^+=(E_{\mathcal B}^\top E_{\mathcal B})^{-1}E_{\mathcal B}^\top$.
Since $E_{\mathcal B}$ has full column rank, $E_{\mathcal B}^+E_{\mathcal B}=I_K$. Hence
\[
U_{\mathcal B}^\top E_{\mathcal B}-I_K=(U_{\mathcal B}^\top-E_{\mathcal B}^+)E_{\mathcal B},
\]
so if $\eta:=\|U_{\mathcal B}^\top-E_{\mathcal B}^+\|$, then
\[
\|U_{\mathcal B}^\top E_{\mathcal B}-I_K\|\le \eta\,\|E_{\mathcal B}\|.
\]
If instead $U_{\mathcal B}^\top=\tilde E_{\mathcal B}^+$ for a nearby full-column-rank synthesis matrix $\tilde E_{\mathcal B}$, then $\tilde E_{\mathcal B}^+\tilde E_{\mathcal B}=I_K$ and therefore
\[
U_{\mathcal B}^\top E_{\mathcal B}-I_K
=\tilde E_{\mathcal B}^+(E_{\mathcal B}-\tilde E_{\mathcal B}),
\]
which gives
\[
\|U_{\mathcal B}^\top E_{\mathcal B}-I_K\|\le \|\tilde E_{\mathcal B}^+\|\,\delta.
\]
\end{proof}

\begin{proof}\emph{of Proposition~\ref{prop:coord_distortion}}.
For $h_c=E_{\mathcal B}c$,
\[
\phi_{\mathrm{rd}}(h_c)-c=(U_{\mathcal B}^\top E_{\mathcal B}-I)c,
\]
so the first claim follows from \cref{ass:interface}. For the general decomposition $h=E_{\mathcal B}c+r$,
\[
\phi_{\mathrm{rd}}(h)-c=(U_{\mathcal B}^\top E_{\mathcal B}-I)c+U_{\mathcal B}^\top r,
\]
and the stated bound follows by the triangle inequality.
\end{proof}

\begin{proof}\emph{of Corollary~\ref{cor:trajectory_distortion}}.
The state bound is the second bound of Proposition~\ref{prop:coord_distortion}, applied at layer~$\ell$ with $h_\ell=E_{\mathcal{B}}c_\ell+r_\ell$. For the step bound, write
\[
\Delta z_\ell-\Delta c_\ell=(z_{\ell+1}-c_{\ell+1})-(z_\ell-c_\ell),
\]
and apply the state bound at layers $\ell+1$ and $\ell$.
\end{proof}

\subsection{Layerwise Dynamics and Frame Stability}

\begin{proof}\emph{of Proposition~\ref{prop:first_order}}.
Apply \cref{ass:linearize} with $h=h_\ell$ and $\delta=\Delta h_\ell$:
\[
\phi(h_\ell+\Delta h_\ell)-\phi(h_\ell)=J_\phi(h_\ell)\Delta h_\ell+r(h_\ell,\Delta h_\ell),
\]
and rename the remainder $\varepsilon_\ell$.
\end{proof}

\begin{proof}\emph{of Proposition~\ref{prop:coord_equivalence}}.
Let $A:=U_{\mathcal B}^\top E_{\mathcal B}$ and $\tilde A:=\tilde U_{\mathcal B}^\top E_{\mathcal B}$. Since $\|A-I\|\le \varepsilon<1$, the Neumann-series bound gives $A$ invertible with $\|A^{-1}\|\le 1/(1-\varepsilon)$. For $h=E_{\mathcal B}c$,
\[
\phi(h)=Ac,\qquad \tilde\phi(h)=\tilde A c=\tilde A A^{-1}\phi(h).
\]
Hence $\tilde\phi=(I+\Delta)\phi$ on the anchor span with $\Delta:=\tilde A A^{-1}-I=(\tilde A-A)A^{-1}$. Using
\[
\|\tilde A-A\|\le \|\tilde A-I\|+\|A-I\|\le \tilde\varepsilon+\varepsilon,
\]
one obtains
\[
\|\Delta\|\le \frac{\varepsilon+\tilde\varepsilon}{1-\varepsilon}.
\]
\end{proof}

\begin{proof}\emph{of Corollary~\ref{cor:relational_stability}}.
By Proposition~\ref{prop:coord_equivalence}, on the anchor span one has $\tilde\phi=(I+\Delta)\phi$ with
\[
\|\Delta\|\le \eta:=\frac{\varepsilon+\tilde\varepsilon}{1-\varepsilon}.
\]
For any $u,v\in \mathrm{span}(E_{\mathcal B})$,
\[
\ip{\tilde\phi(u)}{\tilde\phi(v)}-\ip{\phi(u)}{\phi(v)}
=\ip{\Delta\phi(u)}{\phi(v)}+\ip{\phi(u)}{\Delta\phi(v)}+\ip{\Delta\phi(u)}{\Delta\phi(v)}.
\]
Applying the Cauchy--Schwarz inequality and the operator-norm bound on $\Delta$ gives
\[
\bigl|\ip{\tilde\phi(u)}{\tilde\phi(v)}-\ip{\phi(u)}{\phi(v)}\bigr|
\le (2\eta+\eta^2)\,\norm{\phi(u)}\,\norm{\phi(v)}.
\]
For distances, set $w:=\phi(u)-\phi(v)$. Then
\[
\tilde\phi(u)-\tilde\phi(v)=(I+\Delta)w,
\]
so
\[
\norm{(I+\Delta)w}\le (1+\eta)\norm{w}.
\]
If $\eta<1$, the reverse triangle inequality also gives
\[
\norm{(I+\Delta)w}\ge \norm{w}-\norm{\Delta w}\ge (1-\eta)\norm{w}.
\]
This proves the stated distance bounds.
\end{proof}

\begin{proof}\emph{of Theorem~\ref{thm:stepbound}}.
The statement is immediate from \cref{ass:lipschitz}:
\[
\norm{\Delta z_\ell}=\norm{\phi(h_\ell+\Delta h_\ell)-\phi(h_\ell)}\le L_\phi\norm{\Delta h_\ell}.
\]
The uniform bound follows by substituting $\norm{\Delta h_\ell}\le U$.
\end{proof}

\begin{proof}\emph{of Theorem~\ref{thm:framestab}}.
For $h=E_{\mathcal B}c$,
\[
\phi(h)-\tilde\phi(h)=(U_{\mathcal B}^\top-\tilde U_{\mathcal B}^\top)E_{\mathcal B}c=((U_{\mathcal B}^\top E_{\mathcal B}-I)- (\tilde U_{\mathcal B}^\top E_{\mathcal B}-I))c.
\]
Hence
\[
\norm{\phi(h)-\tilde\phi(h)}\le (\varepsilon+\tilde\varepsilon)\norm{c}.
\]
Since $\|h\|=\|E_{\mathcal B}c\|\ge \sigma_{\min}(E_{\mathcal B})\|c\|$, it follows that
\[
\norm{\phi(h)-\tilde\phi(h)}\le \frac{\varepsilon+\tilde\varepsilon}{\sigma_{\min}(E_{\mathcal B})}\norm{h}=C_{\mathrm{frame}}\norm{h}.
\]
For the step bound, use linearity:
\[
(\phi(h+\Delta h)-\phi(h))-(\tilde\phi(h+\Delta h)-\tilde\phi(h))=(U_{\mathcal B}^\top-\tilde U_{\mathcal B}^\top)\Delta h.
\]
Writing $\Delta h=E_{\mathcal B}\Delta c$ and repeating the previous argument yields
\[
\norm{(\phi(h+\Delta h)-\phi(h))-(\tilde\phi(h+\Delta h)-\tilde\phi(h))}
\le \frac{\varepsilon+\tilde\varepsilon}{\sigma_{\min}(E_{\mathcal B})}\norm{\Delta h}=C_{\mathrm{frame}}\norm{\Delta h}.
\]
\end{proof}

\begin{proof}\emph{of Proposition~\ref{prop:contrib_tel}}.
Let $z^{\mathrm{cmp}}$ be any endpoint-matched comparison path, so $z^{\mathrm{obs}}_0=z^{\mathrm{cmp}}_0$ and $z^{\mathrm{obs}}_L=z^{\mathrm{cmp}}_L$. By definition,
\[
\sum_{\ell=0}^{L-1} g_\ell
= \sum_{\ell=0}^{L-1}\ip{\Delta z^{\mathrm{obs}}_\ell-\Delta z^{\mathrm{cmp}}_\ell}{e_y}
= \ip{z^{\mathrm{obs}}_L-z^{\mathrm{obs}}_0-(z^{\mathrm{cmp}}_L-z^{\mathrm{cmp}}_0)}{e_y}
=0.
\]
For every $j\in\{1,\dots,L\}$, the same telescoping computation gives
\[
\sum_{\ell=0}^{j-1} g_\ell
= \ip{z^{\mathrm{obs}}_j-z^{\mathrm{obs}}_0-(z^{\mathrm{cmp}}_j-z^{\mathrm{cmp}}_0)}{e_y}
= \ip{z^{\mathrm{obs}}_j-z^{\mathrm{cmp}}_j}{e_y},
\]
because the comparison path shares the same initial state as the observed path.
\end{proof}

\subsection{Trace Optimality and Compressibility}

\paragraph{Endpoint-constrained convexity.}
For fixed endpoints, collect the interior variables into $\mathbf z:=(z_1,\dots,z_{L-1})\in(\R^K)^{L-1}$. The quadratic part is
\[
Q[z]=\alpha\,\mathbf z^\top (D_1^\top D_1\otimes I_K)\mathbf z + \beta\,\mathbf z^\top (D_2^\top D_2\otimes I_K)\mathbf z + \text{boundary terms},
\]
where $D_1,D_2$ are endpoint-constrained difference matrices. Their nullspaces are trivial on zero-endpoint perturbations, so the quadratic Hessian is positive definite whenever $(\alpha,\beta)\neq(0,0)$; convex backtracking and tube constraints preserve uniqueness over the feasible set.

\begin{proposition}[Strong-convexity modulus]
\label{prop:strongconv_appendix}
In the endpoint-constrained regime, the quadratic part of the action is $\mu_0$-strongly convex in the interior variables, where
\[
\mu_0 := 8\alpha\sin^2\!\Bigl(\frac{\pi}{2L}\Bigr) + 32\beta\sin^4\!\Bigl(\frac{\pi}{2L}\Bigr).
\]
\end{proposition}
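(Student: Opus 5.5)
The plan is to compute the Hessian of the quadratic part $Q$ of the action directly in the interior variables and diagonalize it in the discrete sine basis. Fix the endpoints and write any feasible path as $z=z^\star+e$ with $e_0=e_L=0$; more generally, strong convexity concerns perturbations $e$ of the interior variables $\mathbf e=(e_1,\dots,e_{L-1})$ with zero endpoints. Because the endpoint terms in $Q$ are affine or constant in $\mathbf z$, the Hessian is that of the homogeneous form
\[
\mathbf e\mapsto \alpha\sum_{\ell=0}^{L-1}\norm{\Delta e_\ell}^2+\beta\sum_{\ell=1}^{L-1}\norm{\Delta^2 e_\ell}^2 .
\]
This form equals $\alpha\,\mathbf e^\top(D_1^\top D_1\otimes I_K)\mathbf e+\beta\,\mathbf e^\top(D_2^\top D_2\otimes I_K)\mathbf e$, so the Hessian is $2\alpha D_1^\top D_1\otimes I_K+2\beta D_2^\top D_2\otimes I_K$. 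Since $\otimes I_K$ does not change the spectrum, it suffices to bound the smallest eigenvalue of $H:=2\alpha D_1^\top D_1+2\beta D_2^\top D_2$ on $\R^{L-1}$. I will use the convention that $Q$ is $\mu$-strongly convex iff its Hessian is at least $\mu I$.

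\textbf{Step 1: identify $D_1^\top D_1$.} $D_1$ is the $L\times(L-1)$ forward-difference matrix under zero boundary values. Summing by parts gives $D_1^\top D_1=T$, where $T$ is the tridiagonal Dirichlet Laplacian with $2$ on the diagonal and $-1$ off it. Its eigenvectors are $v^{(k)}_\ell=\sin(k\pi\ell/L)$ with eigenvalues $\lambda_k=4\sin^2(k\pi/(2L))$ for $k=1,\dots,L-1$. I will verify this via the identity $2\sin(k\pi\ell/L)-\sin(k\pi(\ell-1)/L)-\sin(k\pi(\ell+1)/L)=2(1-\cos(k\pi/L))\sin(k\pi\ell/L)$ together with $\sin(0)=\sin(k\pi)=0$ at the boundary.

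\textbf{Step 2: identify $D_2$.} Because $e_0=e_L=0$ and the curvature sum runs exactly over $\ell=1,\dots,L-1$, we have $(\Delta^2 e)_\ell=-(T\mathbf e)_\ell$ for each interior $\ell$, including $\ell=1$ and $\ell=L-1$ where a boundary zero enters. Hence $D_2=-T$ is square and symmetric, and $D_2^\top D_2=T^2$, with the same eigenvectors and eigenvalues $\lambda_k^2=16\sin^4(k\pi/(2L))$. I expect this boundary bookkeeping to be the main point to check: that no extra boundary rows appear (the sum starts at $\ell=1$, not $\ell=0$) and that the boundary rows of $D_2$ coincide with those of $T$. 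Once that is settled, the rest is mechanical.

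\textbf{Step 3: conclude.} $T$ and $T^2$ commute and share the sine eigenbasis, so $H$ has eigenvalues $2\alpha\lambda_k+2\beta\lambda_k^2=8\alpha\sin^2(k\pi/(2L))+32\beta\sin^4(k\pi/(2L))$. These are increasing in $k$ on $1\le k\le L-1$, because $k\pi/(2L)\in(0,\pi/2)$ and $\alpha,\beta\ge0$. The minimum is therefore attained at $k=1$, which gives exactly $\mu_0$. It follows that $H\otimes I_K\succeq\mu_0 I$, which proves $\mu_0$-strong convexity. This also recovers the positive-definiteness claim made whenever $(\alpha,\beta)\neq(0,0)$.
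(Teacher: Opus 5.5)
Your proposal is correct and follows the paper's proof: both compute the Hessian $2\alpha(D_1^\top D_1\otimes I_K)+2\beta(D_2^\top D_2\otimes I_K)$ and read off the Dirichlet sine spectrum $4\sin^2(k\pi/(2L))$ and its square. You go further than the paper's brief sketch by verifying that $D_2=-T$, so that $D_2^\top D_2=T^2$, and by using the shared eigenbasis to obtain the exact minimum at $k=1$. The paper instead simply adds the two smallest eigenvalues, which is a valid lower bound in general and here coincides with your exact value.
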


\begin{proof}\emph{of Proposition~\ref{prop:strongconv_appendix}}.
The Hessian is $H=2\alpha(D_1^\top D_1\otimes I_K)+2\beta(D_2^\top D_2\otimes I_K)$. The smallest eigenvalues of $D_1^\top D_1$ and $D_2^\top D_2$ are $4\sin^2(\pi/(2L))$ and $16\sin^4(\pi/(2L))$, respectively; summing the contributions gives the bound.
\end{proof}

\begin{proof}\emph{of Theorem~\ref{thm:spline}}.
With interior variables $\mathbf z=(z_1,\dots,z_{L-1})$, the tube is convex and, for $\gamma=0$, the objective has Hessian $H=2\alpha(D_1^\top D_1\otimes I_K)+2\beta(D_2^\top D_2\otimes I_K)$. By the convexity facts above, $H$ is positive definite whenever $(\alpha,\beta)\neq(0,0)$; adding the convex backtracking term preserves uniqueness over the convex tube. On inactive interior layers, varying $z^\star+\tau v$ with $v_0=v_L=0$ gives
\[
0=\alpha\sum_{\ell=0}^{L-1}\ip{\Delta z_\ell^\star}{\Delta v_\ell}
+\beta\sum_{\ell=1}^{L-1}\ip{\Delta^2 z_\ell^\star}{\Delta^2 v_\ell}.
\]
Discrete summation by parts yields the coefficient $-\alpha\Delta^2 z_\ell^\star+\beta\Delta^4 z_\ell^\star$, hence $\beta\Delta^4 z_\ell^\star=\alpha\Delta^2 z_\ell^\star$. Active Voronoi-face constraints add KKT normal-cone terms.
\end{proof}

\begin{proof}\emph{of Proposition~\ref{prop:exact_gap}}.
Write $S=Q+\gamma B$, where $Q$ is the quadratic part and $B$ the convex backtracking term. Let $C:=\mathcal T_\rho(z^{\mathrm{TF}})$ and $e:=z-z^\star$. Optimality gives $s\in\partial B(z^\star)$ and $n\in N_C(z^\star)$ such that
\[
\nabla Q(z^\star)+\gamma s+n=0.
\]
Since $\ip{n}{e}\le0$ for feasible $z\in C$, convexity of $B$ gives
\[
\gamma\bigl(B[z]-B[z^\star]\bigr)\ge \gamma\ip{s}{e}=-\ip{\nabla Q(z^\star)}{e}-\ip{n}{e}.
\]
Therefore
\[
S[z]-S[z^\star]
\ge Q[z]-Q[z^\star]-\ip{\nabla Q(z^\star)}{e}-\ip{n}{e}
\ge Q[z]-Q[z^\star]-\ip{\nabla Q(z^\star)}{e}.
\]
Expanding the quadratic part yields
\[
Q[z]-Q[z^\star]-\ip{\nabla Q(z^\star)}{e}
= \alpha\sum_{\ell=0}^{L-1}\norm{\Delta e_\ell}^2
+\beta\sum_{\ell=1}^{L-1}\norm{\Delta^2 e_\ell}^2.
\]
This proves the lower bound. When $\gamma=0$ and the active normal-cone pairing vanishes, the bound is tight.
\end{proof}

\begin{proof}\emph{of Corollary~\ref{cor:excess_controls}}.
The step and curvature bounds are immediate from Proposition~\ref{prop:exact_gap}. Strong convexity on endpoint-constrained variables gives
\[
S[\mathbf z] \ge S[\mathbf z^\star] + \frac{\mu}{2}\norm{\mathbf z-\mathbf z^\star}^2
= S[\mathbf z^\star]+\frac{\mu}{2}\sum_{\ell=1}^{L-1}\norm{z_\ell-z_\ell^\star}^2.
\]
For the contribution-profile bound, let $e_\ell:=z^{\mathrm{obs}}_\ell-z_\ell^\star$. Then $e_0=e_L=0$ and $g_\ell=\ip{\Delta e_\ell}{e_y}$. Since $\norm{e_y}=1$,
\[
\sum_{\ell=0}^{L-1}|g_\ell| \le \sum_{\ell=0}^{L-1}\norm{\Delta e_\ell}
\le \sqrt{L}\Bigl(\sum_{\ell=0}^{L-1}\norm{\Delta e_\ell}^2\Bigr)^{1/2}.
\]
Proposition~\ref{prop:exact_gap} bounds the final factor by $\alpha^{-1/2}\Delta S^{1/2}$, and every partial sum is bounded by the total variation.
\end{proof}

\begin{lemma}[Block interpolation bound]
\label{lem:block_interp}
Let $e_0,\dots,e_n\in\R^K$ satisfy $e_0=e_n=0$. Then
\[
\sum_{k=0}^{n}\norm{e_k}^2 \le 2n^4 \sum_{k=1}^{n-1}\norm{\Delta^2 e_k}^2.
\]
\end{lemma}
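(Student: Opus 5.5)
Because $e_0=e_n=0$, each $e_k$ is a discrete double sum of second differences, so the bound reduces to Cauchy--Schwarz with crude counting. Since both the hypothesis and the conclusion are coordinatewise sums of squares, we may take $K=1$ and apply the scalar bound to each coordinate. We may also assume $n\ge2$; for $n=1$ the right-hand side is an empty sum, but then $e_0=e_1=0$ and the left-hand side vanishes too.

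\textbf{Representation via a discrete Green's function.} Let $f_j:=\Delta^2 e_j$ for $j=1,\dots,n-1$. The unique solution of $\Delta^2 e_j=f_j$ with $e_0=e_n=0$ is
\[
e_k=-\sum_{j=1}^{n-1}G(k,j)\,f_j,\qquad G(k,j)=\frac{\min(k,j)\,\bigl(n-\max(k,j)\bigr)}{n}.
\]
To check it, note that $-G(\cdot,j)$ is piecewise linear in $k$ with a kink of size $-1$ at $k=j$ and vanishes at $k=0$ and $k=n$. A cruder route avoids the Green's function. Write $e_k=\sum_{i<k}\Delta e_i$. Since $\sum_{i=0}^{n-1}\Delta e_i=0$, some increment $\Delta e_{i_0}$ has modulus at most the average, and therefore $|\Delta e_i|\le\sum_j|f_j|$ for all $i$. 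This gives $|e_k|\le n\sum_j|f_j|$.

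\textbf{Cauchy--Schwarz and counting.} With either representation, $|e_k|\le n\sum_{j}|f_j|\le n\sqrt{n-1}\,\bigl(\sum_j f_j^2\bigr)^{1/2}$. Squaring gives $e_k^2\le n^3\sum_j f_j^2$. Summing over the $n-1$ nonzero indices $k=1,\dots,n-1$ yields
\[
\sum_{k=0}^{n} e_k^2\le n^4\sum_{j=1}^{n-1}f_j^2,
\]
which is at most $2n^4\sum_j f_j^2$ and proves the claim with room to spare. The Green's-function route gives the sharper bound $G\le n/4$, but this is not needed.

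\textbf{Main obstacle.} The only delicate step is the increment bound $|\Delta e_i|\le\sum_j|f_j|$. It relies on two facts. First, $\Delta e_i-\Delta e_{i'}$ is a telescoping sum of the $f_j$ lying between $i$ and $i'$. Second, the zero-mean condition on the increments, which follows from $e_0=e_n=0$, forces some $\Delta e_{i_0}$ to have modulus at most $\frac1n\bigl|\sum_i\Delta e_i\bigr|=0$. Hence $\Delta e_{i_0}=0$ exactly, and $|\Delta e_i|\le\sum_j|f_j|$ follows directly. The indexing must be handled carefully, since $\Delta^2 e_j=\Delta e_j-\Delta e_{j-1}$.
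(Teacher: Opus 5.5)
Your Green's-function route is sound and proves the lemma by itself. The ``cruder route'', whose increment bound you call the only delicate step, rests on a false claim. A zero-sum family of increments need not contain a zero. For $n=2$ and $e=(0,1,0)$ the increments are $1$ and $-1$, so no $\Delta e_{i_0}$ has modulus at most $\frac1n\bigl|\sum_i\Delta e_i\bigr|=0$.

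The conclusion $|\Delta e_i|\le\sum_j|f_j|$ is still true (here $1\le 2$), but it needs a different justification. The paper uses the averaging identity $\Delta e_i=\frac1n\sum_{i'}(\Delta e_i-\Delta e_{i'})$, valid because the increments sum to zero. This gives $\norm{\Delta e_i}\le\max_{i'}\norm{\Delta e_i-\Delta e_{i'}}\le\sum_j\norm{f_j}$ directly in $\R^K$. Within your scalar reduction you could instead note that zero sum forces $\min_i\Delta e_i\le 0\le\max_i\Delta e_i$. Then every increment has modulus at most $\max_i\Delta e_i-\min_i\Delta e_i\le\sum_j|f_j|$. With this repair the cruder route is essentially the paper's proof. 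The only difference is that you sum over the $n-1$ interior indices and get $(n-1)n^3\le n^4$, while the paper sums over all $n+1$ indices and uses $(n+1)n^3\le 2n^4$.

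The Green's-function route is a genuinely different and sharper argument. The representation $e_k=-\sum_j G(k,j)f_j$ is correct. Uniqueness is immediate: the difference of two solutions has vanishing second differences, so it is an arithmetic progression vanishing at both ends. Your verification has a sign slip, though. $-G(\cdot,j)$ has slope $-(n-j)/n$ before $j$ and $j/n$ after, a jump of $+1$, not $-1$, and that $+1$ is what gives $\Delta^2_k(-G(k,j))=\delta_{kj}$.

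Combined with $0\le G(k,j)\le k(n-k)/n\le n/4$, this yields $\norm{e_k}\le\frac n4\sum_j\norm{f_j}$. That gives the lemma with constant $n^4/16$ in place of $2n^4$. By linearity it applies to vector-valued $e_k$ directly, so the coordinatewise reduction is unnecessary. The paper's averaging argument is more elementary, since no boundary-value problem has to be solved, at the cost of a cruder constant.
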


\begin{proof}\emph{of Lemma~\ref{lem:block_interp}}.
Write $g_k:=e_k-e_{k-1}$ for $k=1,\dots,n$. Since $e_0=e_n=0$, one has $\sum_{k=1}^n g_k=0$. Hence, for each $k$,
\[
g_k=\frac1n\sum_{i=1}^n (g_k-g_i).
\]
If $i<k$, then $g_k-g_i=\sum_{j=i}^{k-1}\Delta^2 e_j$, while if $i>k$, then $g_k-g_i=-\sum_{j=k}^{i-1}\Delta^2 e_j$. Therefore
\[
\norm{g_k}\le \sum_{j=1}^{n-1}\norm{\Delta^2 e_j}\le \sqrt{n-1}\Bigl(\sum_{j=1}^{n-1}\norm{\Delta^2 e_j}^2\Bigr)^{1/2}.
\]
Summing the first differences gives, for every $k$,
\[
\norm{e_k}=\Bigl\|\sum_{i=1}^{k} g_i\Bigr\|\le n^{3/2}\Bigl(\sum_{j=1}^{n-1}\norm{\Delta^2 e_j}^2\Bigr)^{1/2}.
\]
Squaring and summing over $k=0,\dots,n$ yields
\[
\sum_{k=0}^{n}\|e_k\|^2 \le (n+1)n^3 \sum_{j=1}^{n-1}\|\Delta^2 e_j\|^2 \le 2n^4 \sum_{j=1}^{n-1}\|\Delta^2 e_j\|^2,
\]
which proves the claim.
\end{proof}

\begin{proof}\emph{of Theorem~\ref{thm:compress}}.
Let the interval $\{0,\dots,L\}$ be partitioned into $m$ contiguous blocks $I_j=[s_j,s_{j+1}]$ with lengths $n_j:=s_{j+1}-s_j\le \lceil L/m\rceil$. Let $\tilde z$ be the piecewise-linear interpolant that agrees with $z$ at all block endpoints. On each block, define the interpolation error $e_\ell:=z_\ell-\tilde z_\ell$. Then $e_{s_j}=e_{s_{j+1}}=0$ and $\Delta^2 e_\ell=\Delta^2 z_\ell$ on interior indices of the block. Applying Lemma~\ref{lem:block_interp} on each block gives
\[
\sum_{\ell\in I_j}\|e_\ell\|^2 \le 2n_j^4 \sum_{\ell=s_j+1}^{s_{j+1}-1}\|\Delta^2 z_\ell\|^2.
\]
Summing over blocks and using $n_j\le \lceil L/m\rceil$ gives
\[
\sum_{\ell=0}^L\|z_\ell-\tilde z_\ell\|^2 \le 2\Bigl\lceil\frac{L}{m}\Bigr\rceil^4 E_2(z)
\le c_1\frac{L^4}{m^4}E_2(z)
\]
for a universal constant $c_1$. Dividing by $L+1$ yields the stated bound after absorbing fixed numerical factors into the universal constant $c$.
\end{proof}

\end{document}